\documentclass{article}

\usepackage{microtype}
\usepackage{graphicx}
\usepackage{subcaption}
\usepackage{booktabs} 
\usepackage{colortbl}
\usepackage{xcolor}
\usepackage{xurl}
\usepackage{enumitem}

\usepackage{hyperref}

\usepackage[accepted]{icml2026}

\usepackage{amsmath}
\usepackage{amssymb}
\usepackage{mathtools}
\usepackage{amsthm}
\usepackage{xfp}
\usepackage{thm-restate}

 \usepackage[capitalize]{cleveref}
\crefname{figure}{Fig.}{Figs.}
\crefname{table}{Tab.}{Tabs.}
\crefname{equation}{Eq.}{Eqs.}
\crefname{section}{Sec.}{Secs.}
\crefname{subsection}{Sec.}{Secs.}
\crefname{appendix}{App.}{Apps.}
\crefname{theorem}{Thm.}{Thms.}
\crefname{lemma}{Lem.}{Lems.}
\crefname{corollary}{Cor.}{Cors.}
\crefname{proposition}{Prop.}{Props.}
\crefname{algorithm}{Alg.}{Algs.}

\theoremstyle{plain}
\newtheorem{theorem}{Theorem}[section]

\newtheorem{lemma}[theorem]{Lemma}

\theoremstyle{definition}

\newtheorem{assumption}[theorem]{Assumption}
\theoremstyle{remark}

\definecolor{scoreblue}{RGB}{66,133,244}
\definecolor{rankorange}{RGB}{244,180,0}
\newcommand{\scorecell}[2]{%
  \cellcolor{scoreblue!#1}{#2}%
}
\newcommand{\avgcell}[2]{%
  \cellcolor{rankorange!#1}{#2}%
}

\definecolor{rowgray}{gray}{0.92}
\definecolor{myorange}{RGB}{245,166,35}
\definecolor{myblue}{RGB}{74,144,226}

\definecolor{posblue}{RGB}{47,128,237}
\definecolor{negred}{RGB}{178,34,34}
\definecolor{targetrow}{RGB}{255,243,205} 

\usepackage[textsize=tiny]{todonotes}

\icmltitlerunning{Convex Dataset Valuation for Post-Training}

\begin{document}

\twocolumn[
  \icmltitle{Convex Dataset Valuation for Post-Training}



  \icmlsetsymbol{equal}{*}

  \begin{icmlauthorlist}
    \icmlauthor{Siqi Zeng}{uiuc}
    \icmlauthor{Christopher Jung}{meta}
    \icmlauthor{Rui Li}{meta}
    \icmlauthor{Zhe Kang}{meta}
    \icmlauthor{Ming Li}{meta}
    \icmlauthor{Nima Noorshams}{meta}
    \icmlauthor{Zhigang Wang}{meta}
    \icmlauthor{Fuchun Peng}{meta}
    \icmlauthor{Han Zhao}{uiuc}
    \icmlauthor{Xue Feng}{meta}
  \end{icmlauthorlist}

  \icmlaffiliation{uiuc}{Department of Computer Science, University of Illinois
Urbana-Chamapign, Urbana, IL, USA}
  \icmlaffiliation{meta}{Meta, Menlo Park, CA, USA}

  \icmlcorrespondingauthor{Siqi Zeng}{siqi6@illinois.edu}

  \icmlkeywords{data valuation, large language model post-training, data marketplace, kernel mean matching, gradient-based attribution}

  \vskip 0.3in
]



\printAffiliationsAndNotice{\emph{The dataset, model access, and experiments are managed by UIUC.}}  

\begin{abstract}
Improving LLM performance on downstream tasks sometimes requires leveraging
auxiliary datasets during post-training. In practice, however, developers face
constraints on compute, labeling, and licensing costs that preclude using all
available data, necessitating principled dataset-level selection. These
constraints are increasingly shaped by dataset marketplaces, where data
acquisition is governed by budgets and negotiation. We study dataset valuation as a subset selection problem during LLM post-training.
Our goal is to identify and weight auxiliary datasets so as to maximize target
task performance given constrained budgets. We first show that commonly used gradient
alignment scores provide a reasonable yet incomplete valuation signal, as they ignore
redundancy among datasets. To address this, we propose a
scalable convex dataset-level valuation method based on kernel mean matching (KMM) in gradient
space, which jointly accounts for alignment with the target task and redundancy
across auxiliary datasets. 
Through extensive experiments across diverse post-training
settings and tasks, we show that our approach
consistently outperforms existing valuation baselines, achieving
stronger performance with low computational overhead. Our results position
dataset valuation as a practical decision tool for post-training data selection
in market-constrained large language model settings. The code is available
at \url{https://github.com/uiuctml/convex_data_valuation}.
\end{abstract}

 \section{Introduction}

As large language models (LLMs) scale, training data has become an explicit
economic resource. High-quality datasets are increasingly curated, licensed,
and traded, giving rise to emerging dataset marketplaces in which data
owners monetize access and model developers acquire data under limited budgets
and asymmetric information
\citep{chen2023data,zhang2023survey}. In this regime, training data is no longer
freely available at scale: acquisition decisions are constrained by cost,
licensing, and negotiation, rather than by storage or availability alone.

\begin{figure}[htbp]
    \centering
    \includegraphics[width=0.5\linewidth]{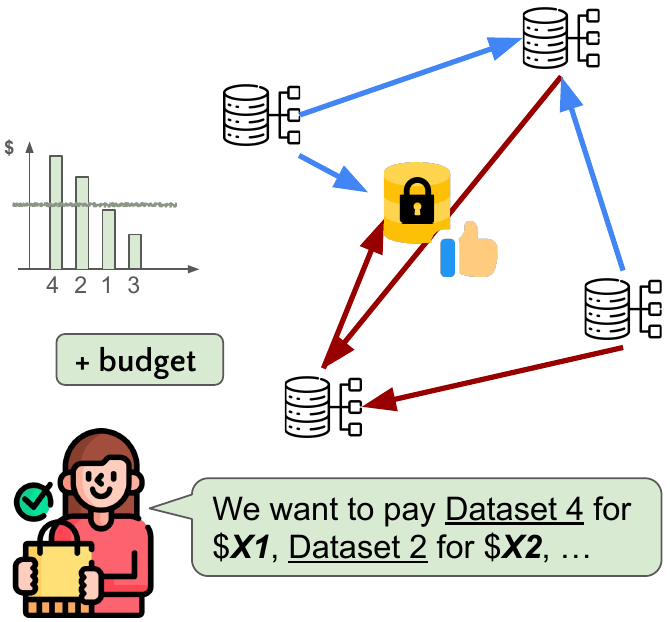}
    \caption{Buyers in data marketplace model. Auxiliary datasets exhibit heterogeneous \textcolor{posblue}{positive} and \textcolor{negred}{negative} transfer relationships with respect to the target task (yellow) and with each other, reflecting redundancy and interference. Valuation scores provide a compact summary of these interactions, helping buyers rank candidate datasets and allocate budgets under limited data access.}
    \label{fig:marketplace}
\end{figure}

In such dataset markets, model developers typically begin with a specific target task whose performance they aim to improve. But for many target tasks, obtaining sufficient high-quality supervision is
challenging: labels may be scarce, expensive, or slow to collect, for example, when they require expert human annotation \citep{alzubaidi2023survey,mcgiff2025overcoming}. Consequently, practitioners often turn to
auxiliary datasets as a practical means of enhancing target-task performance. From a buyer’s perspective, this necessitates deliberate planning: developers must make discrete binary acquisition decisions, whether to include or exclude each auxiliary dataset, given limited resources and incomplete knowledge of their downstream impact.

In data marketplaces for large-scale model training, a central object is a
dataset valuation signal: a summary of how each candidate dataset is expected to
contribute to target-task performance when combined with other available data.
Such valuation is challenging because datasets interact nontrivially: auxiliary
datasets may exhibit positive or negative transfer with respect to the target
task and with each other
\citep{Rosenstein2005ToTO,standley2020tasks}, so their value cannot be assessed in
isolation.

As illustrated in \cref{fig:marketplace}, valuation signals enable
buyer-side budgeting and acquisition planning. Recent work shows that valuation-aware pricing and
acquisition strategies can substantially improve cost-performance trade-offs
and the sustainability of data marketplaces, compared to heuristic
approaches \citep{chen2023data,zhang2025fairshare}. Despite this progress, existing dataset valuation methods are poorly aligned
with the needs of post-training data markets. Many operate at the level of
individual examples \citep{xia2024less,hu2025grass,hu2025snapshot,dai2025improving,wang2025nice,ghorbani2019data,ilyas2022datamodels}
or focus on pretraining data \citep{liu2024regmix,xie2023doremi}, making them
ill-suited to post-training scenarios in which training objectives vary across
tasks and acquisition decisions are made at the level of entire datasets.
In such settings, naively aggregating sample-wise valuation scores to support
dataset-level decisions is known to be unreliable~\citep{hu2024most}.

\vspace*{-1em}
\paragraph{Contributions.}
In this work, we propose a data valuation algorithm that provides a target-task-aware ranking that directly supports budgeting, acquisition planning, and tentative pricing in realistic post-training data marketplaces. We make the following contributions. 
(1) We formulate dataset valuation as a \emph{post-training subset selection}
problem for LLMs, motivated by budgeted data acquisition and
dataset marketplaces, where each dataset represents a costly inclusion decision.
(2) We propose a scalable and principled dataset-level valuation method based on gradient-space kernel mean matching (KMM), which accounts for both alignment with the target task and
redundancy among multiple sources of auxiliary datasets, overcoming limitations of independent gradient-based scoring. The formulation leads to a convex quadratic program that can be efficiently solved with respect to the number of datasets. 
(3) We provide a finite-sample guarantee for the proposed method, under mild assumptions on the data, showing that the valuation scores can be reliably estimated from a finite number of samples from each dataset.
(4) Through extensive experiments across diverse post-training settings,
we demonstrate that our approach yields faithful and interpretable dataset
rankings that consistently outperform existing valuation baselines across target tasks and hyperparameter choices.

\section{Problem Setup}
\label{sec:problem_setup}

We consider the post-training setting of generative LLMs. Let
$\theta_0 \in \mathbb{R}^d$ denote a pretrained model obtained from large-scale pretraining.
Our goal is to further adapt $\theta_0$ to a specific downstream task via
post-training on a mixture of datasets. Let $D^{\text{tar}}$ denote a \emph{target dataset} which defines the
downstream task of interest yet assumed to be limited and difficult to obtain. Meanwhile, we are given a collection of \emph{auxiliary datasets} $\mathcal{D}^{\text{aux}} = \{ D_1, D_2, \dots, D_N \}$, which may provide additional supervision during post-training. These datasets
may differ in domain, language, style, or source, and their relevance to the
target task is generally unknown. 

\textbf{Dataset valuation} is the process of assigning to each auxiliary dataset
$D_i$ a scalar score $w_i \in \mathbb{R}$, which reflects its expected
contribution to post-training performance on a fixed target task, relative to
other candidates in $\mathcal{D}^{\text{aux}}$. Dataset values should satisfy the following properties:

\begin{itemize}[leftmargin=*, itemsep=0pt, topsep=2pt]
    \item \textbf{Task-dependent.}
    Values are defined with respect to a specific target task and
    evaluation metric. A dataset may be valuable for one task metric while being
    irrelevant or harmful for another.

    \item \textbf{Signed.}
    Values may be positive, zero, or negative, reflecting datasets that improve,
    have no effect on, or degrade target-task performance.

    \item \textbf{Comparative.}
    Values are meaningful only relative to other auxiliary datasets within the
    same post-training setting and are best interpreted as inducing a
    ranking over candidate datasets. Absolute prices are not assumed to transfer
    across tasks, models, or fine-tuning procedures.
\end{itemize}

\subsection{Modeling Assumptions}
\label{sec:model-assumptions}
\paragraph{Post-training with auxiliary datasets.}
We study dataset valuation in the post-training stage of LLMs,
where a base model $\theta_0$ is adapted to a target task using supervised
fine-tuning or reinforcement learning. In this setting, practitioners may
augment limited target-task data $D^{\text{tar}}$ with auxiliary datasets
$\mathcal{D}^{\text{aux}}$ to improve downstream performance. Given a fixed
fine-tuning procedure $\textsc{FT}(\cdot)$, training on a selected set of
auxiliary datasets $\mathcal{S} \subseteq \mathcal{D}^{\text{aux}}$ yields a
post-trained model
\vspace{-1mm}
\begin{equation*}
    \theta_{\mathcal{S}} = \textsc{FT}\!\left(\theta_0,\; \{D^{\text{tar}}\} \cup \mathcal{S} \right),
\vspace{-1mm}
\end{equation*}
which is evaluated on the target dataset $D^{\text{tar}}$. Dataset valuation
aims to guide the choice of $\mathcal{S}$ so as to improve target-task
performance relative to the baseline model
$\theta_{\text{tar}} = \textsc{FT}(\theta_0,\{D^{\text{tar}}\})$ trained on the
target data alone.

\paragraph{Dataset selection under budget constraints.}
We focus on a binary selection problem: for each auxiliary dataset
$D_i \in \mathcal{D}^{\mathrm{aux}}$, the model developer must decide whether to
include it in post-training or exclude it under budget constraints. This naturally induces a subset selection problem, where a decision is represented by the selected auxiliary
subset $\mathcal{S}\subseteq\mathcal{D}^{\mathrm{aux}}$. This assumption reflects common data
marketplace practices \citep{chen2023data}, where datasets are acquired as
indivisible units under one-time, upfront licensing agreements. For any subset
$\mathcal{S}$, let $u(\mathcal{S})$ denote
the utility the buyer associates with selecting $\mathcal{S}$. Ideally, following the definition, the buyer would solve
\begin{equation}
    \max_{\mathcal{S} \subseteq \mathcal{D}^{\mathrm{aux}},\, |\mathcal{S}|\le k}\; u(\mathcal{S}).
    \label{eq:subset_selection}
\end{equation}
However, the true $u(\mathcal{S})$ is typically unknown before acquisition, and evaluating it for exponentially many subsets is computationally intractable. Thus, in practice, a valuation method outputs dataset-level scores $w=(w_1,\dots,w_N)\in\mathbb{R}^N$, and the buyer can make a tractable budgeted decision by selecting the top-$k$ datasets:
\begin{equation}
    \mathcal{S}_w = \operatorname{TopK}_{i\in[N]}(w_i).
    \label{eq:score_based_selection}
\end{equation}
The goal of dataset valuation is to produce scores whose induced selection $\mathcal{S}_w$ achieves high utility $u(\mathcal{S}_w)$.

\paragraph{Limited access to auxiliary data prior to acquisition.}
In practice, access to auxiliary datasets is constrained by legal and privacy considerations. As noted by \citet{chen2023data}, data acquirers are often provided only with limited sample previews of a dataset
before committing to its acquisition. Thus, during valuation, the full
contents of an auxiliary dataset $D_i$ are not observable. Instead, the model
developer has access only to a small size $m$ preview
$\tilde{D}_i \subset D_i$, with the complete dataset becoming available only
after the inclusion decision is made. Dataset valuation methods must therefore
operate under partial observability of auxiliary data.
\section{Methodology}
\subsection{Gradient-Based Approximation of Dataset Value}
\label{sec:greedy-gradient}

Let $\ell(x;\theta)$ denote the per-example loss of a model with parameters
$\theta$ on input $x$. We define the population target and auxiliary risks as
\begin{equation*}
    \mathcal{L}_{\text{tar}}(\theta)
= \mathbb{E}_{x \sim \mathcal{P}^{\text{tar}}}[\ell(x;\theta)],
\qquad
\mathcal{L}_i(\theta)
= \mathbb{E}_{x \sim \mathcal{P}_i}[\ell(x;\theta)].
\end{equation*}
We first derive a simple gradient-based baseline under an independence assumption, where each dataset contributes only through its singleton utility:
\begin{equation}
    u(\mathcal{S}) = \sum_{i \in \mathcal{S}} u(\{i\}).
\end{equation}
It remains to specify the singleton utilities $u(\{i\})$. Consider fine-tuning a pretrained model $\theta_0$ with learning rate $\eta$.
We define the singleton utility as the improvement in target loss after one
gradient step on dataset $i$:
\begin{equation}
    u(\{i\})
    \;\triangleq\;
    \mathcal{L}_{\text{tar}}(\theta_0)
    -
    \mathcal{L}_{\text{tar}}(\theta_0 - \eta \nabla \mathcal{L}_i(\theta_0)).
    \label{eq:singleton-utility}
\end{equation}
A first-order Taylor approximation yields
\begin{equation}
    u(\{i\}) = 
    \eta \langle g_{\text{tar}}, g_i \rangle + O(\eta^2\|g_i\|_2^2),
\end{equation}
where $g_{\text{tar}} = \nabla \mathcal{L}_{\text{tar}}(\theta_0)$ and
$g_i = \nabla \mathcal{L}_i(\theta_0)$.

Since $\eta > 0$ is a global constant, it does not affect the ranking of
datasets. This yields the valuation rule
\begin{equation}
    w_i = \langle g_{\text{tar}}, g_i \rangle.
    \label{eq:gradient-valuation}
\end{equation}
More generally, instead of a single-step gradient, one may use
trajectory-based \emph{task vectors} \citep{ilharco2022editing}, defined as the
parameter difference
$\tau_{\mathcal S} = \theta_{\mathcal S} - \theta_0$
resulting from fine-tuning $\theta_0$ on a dataset or dataset collection
$\mathcal S$ for multiple steps. Task vectors capture longer-horizon training
dynamics beyond the local linearization, and similarity between auxiliary and
target tasks can be quantified via the dot product
$\langle \tau_{\mathcal S}, \tau_{\text{tar}} \rangle$.

In the context of LLM post-training, dataset gradients are pooled at the sequence
level before aggregation. For supervised fine-tuning (SFT), the per-example loss
typically corresponds to sequence-level cross-entropy, computed as the average
negative log-likelihood over generated tokens. For reinforcement learning (RL),
$\ell$ denotes a surrogate loss derived from trajectory-level rewards (e.g.,
GRPO-style objectives \citep{shao2024deepseekmath}).

\vspace{-1mm}
\paragraph{Limitations.}
While this additive utility model yields a simple and scalable valuation rule, it
treats auxiliary datasets independently. This ignores two important aspects of
the selection problem. First, auxiliary datasets may be \emph{redundant}: multiple
datasets can induce similar update directions, leading to diminishing returns
when included together. Second, the usefulness of a dataset depends on the
\emph{set of other datasets} selected alongside it, rather than on its individual
marginal contribution. These limitations motivate a joint valuation approach that
accounts for interactions among datasets, which we develop next.

\subsection{Kernel Mean Matching for Dataset Valuation}
\label{sec:kmm_method}

We now relax the additive utility assumption in \cref{sec:greedy-gradient} and consider a joint valuation approach that accounts for interactions among auxiliary datasets. As discussed in \cref{sec:model-assumptions}, the underlying post-training utility is unknown and naturally may be non-additive due to dataset interactions \citep{hu2024most}. In this section, we introduce a non-additive utility model and show that Kernel Mean Matching (KMM) yields the optimal valuation scores.

\paragraph{Non-additive utility model.}
We model post-training utility as a function of the selected dataset subset
$\mathcal{S} \subseteq \mathcal{D}^{\mathrm{aux}}$. Let $g(\mathcal{S}) = \sum_{i \in \mathcal{S}} g_i$ denote the \emph{joint} update direction induced by post-training on $\mathcal{S}$. We define
the utility as
\begin{equation}
    u(\mathcal{S})
    \;\triangleq\;
    \mathcal{L}_{\text{tar}}(\theta_0) - \mathcal{L}_{\text{tar}}(\theta_0 - \eta g(\mathcal{S})),
    \label{eq:subset-utility}
\end{equation}
which is generally non-additive in $\mathcal{S}$ due to interactions among
datasets through the loss landscape. This utility captures the buyer's objective
of improving target-task performance considering all datasets in $\mathcal{S}$ globally.

\subsubsection{KMM as Gradient Space Least Squares}
\label{sec:kmm-gradient-space}
To obtain a tractable optimization problem, we approximate the target loss
locally around $\theta_0$. A second-order Taylor expansion yields
\begin{equation}
    u(\mathcal{S}) = 
    \eta \langle g_{\text{tar}}, g(\mathcal{S}) \rangle
    - \frac{\eta^2}{2} g(\mathcal{S})^\top H_{\text{tar}}\, g(\mathcal{S}) + O\left(\eta^3\|g(\mathcal{S})\|_2^3\right),
    \label{eq:taylor_tar}
\end{equation}
where $g_{\text{tar}} = \nabla \mathcal{L}_{\text{tar}}(\theta_0)$ and
$H_{\text{tar}} = \nabla^2 \mathcal{L}_{\text{tar}}(\theta_0)$.

To enable continuous optimization, we relax subset selection in \cref{eq:subset_selection} to weighted
combinations. A subset $\mathcal{S}$ can be encoded as a binary vector
$w \in \{0,1\}^N$ with $w_i = \mathbf{1}[i \in \mathcal{S}]$, so that $g(\mathcal{S}) := \sum_{i=1}^N w_i g_i = Gw$, where $G = [g_1, \dots, g_N]$ is the gradient matrix. Further relaxing to continuous
weights $w \in \mathbb{R}^N$ and substituting into \cref{eq:taylor_tar}
defines the relaxed utility $u(w) := \eta \langle g_{\text{tar}}, Gw \rangle - \frac{\eta^2}{2} (Gw)^\top H_{\text{tar}}\, (Gw).$

For clarity, we first consider the isotropic case
$H_{\text{tar}} = \lambda^{-1} I_N$ with $\lambda > 0$. Maximizing $u(w)$ is equivalent to
\begin{equation}
    \min_{w \in \mathcal{W}_k}
    \frac{1}{2\lambda} \|Gw\|_2^2 - \langle g_{\text{tar}}, Gw \rangle
    \;\equiv\;
    \min_{w \in \mathcal{W}_k}
    \left\| Gw - \lambda g_{\text{tar}} \right\|_2^2,
    \label{eq:kmm_scaled}
\end{equation}
where $\mathcal{W}_k = \{ w \in \mathbb{R}^N : \|w\|_1 \le k \}$ is the convex
relaxation of the cardinality constraint $|\mathcal{S}| \le k$, and the equivalence follows from completing the square.

We can see that \cref{eq:kmm_scaled} aims to match auxiliary and target update directions. The solution $w^*$ of \cref{eq:kmm_scaled} defines dataset-level scores. As a side note, the isotropic curvature assumption is not required for the KMM
formulation itself. More generally (\cref{app:general_kmm}), any positive semidefinite local metric induces a corresponding weighted matching objective, and the same KMM structure applies.

Unlike the original KMM formulation for covariate shift
\citep{gretton2009covariate} (\Cref{app:kmm-original}), which enforces nonnegativity and simplex constraints
so that weights define a probability distribution, we do not impose such
constraints here. Dataset valuation is a signed utility signal, and allowing
negative weights enables the representation of negative transfer. 

\subsubsection{KMM as Gram Space Quadratic Program}
Let $K \in \mathbb{R}^{N\times N}$ be the Gram matrix of dataset gradients:
\vspace{-1mm}
\begin{equation*}
    K_{ij} \;=\; \langle g_i, g_j \rangle, \qquad
    \beta \in \mathbb{R}^N,\;\; \beta_i \;=\; \langle g_i, g_{\text{tar}} \rangle.
    \vspace{-1mm}
\end{equation*}
Then the KMM objective \cref{eq:kmm_scaled} expands to
\vspace{-1mm}
\begin{equation}
\min_{w \in \mathcal{W}_k} \;\; \frac{1}{2}\, w^\top K w \;-\; \lambda\beta^\top w,
\label{eq:kmm_qp}
\vspace{-1mm}
\end{equation}
dropping constants. The Gram matrix $K$ induces a positive semidefinite kernel via
a neural-tangent-style \citep{jacot2018neural} inner product on dataset-level gradients, and the overall
problem is a convex quadratic program (QP) with a global optimum that can be solved in $O(N^3)$.

The QP form \cref{eq:kmm_qp} makes the KMM mechanism explicit. The term $-\lambda \beta^\top w$ encourages selecting datasets whose gradients align with $g_{\text{tar}}$ (large $\beta_i$). The quadratic term $w^\top K w$ penalizes selecting datasets whose gradients are similar to each other: if $g_i$ and $g_j$ are redundant
(large $K_{ij}$), then assigning weight to both increases the penalty through the
cross term $\lambda w_i w_j K_{ij}$. As a result, the KMM objective jointly optimizes all weights $w_i$, so each resulting score reflects the conditional marginal contribution of dataset $i$ given the presence of other auxiliary datasets. These interaction-aware scores can be used within an additive proxy in \cref{sec:model-assumptions} for budgeting decisions. In contrast, ranking by $\beta_i$ alone is the special case of \cref{eq:kmm_qp} with $K=\mathbf{0}_N$: the redundancy penalty vanishes, so datasets are ranked only by their individual alignment with the target gradient.

\paragraph{Example.}
Let $\lambda = 1$. Consider a target gradient $g_{\mathrm{tar}}=(1,1)$ and three auxiliary gradients
$g_1=g_2=(1,0.1)$ and $g_3=(0,1)$. Their alignment scores satisfy
$\langle g_1,g_{\mathrm{tar}}\rangle=\langle g_2,g_{\mathrm{tar}}\rangle=1.1 >
\langle g_3,g_{\mathrm{tar}}\rangle=1$, so a greedy alignment-based rule would
select $\{g_1,g_2\}$. However, since $g_1$ and $g_2$ are identical, any linear
combination of them lies in the one-dimensional subspace spanned by $(1,0.1)$ and
cannot match $g_{\mathrm{tar}}$, regardless of the weights. In contrast, KMM’s
objective depends on $g_1$ and $g_2$ only through the sum of their weights, and
under $\ell_1$ regularization naturally prefers sparse solutions that avoid
redundancy. For instance, $w^*=(1,0,0.9)$ achieves zero matching error by
assigning weight to only one of $\{g_1,g_2\}$ and including $g_3$ to supply the
missing component needed to match the target gradient.

\paragraph{Connection to Modern Portfolio Theory.}
The structure of \cref{eq:kmm_qp} admits a useful interpretation through the lens of classical mean-variance portfolio optimization~\citep{Markowitz1952portfolio}. In the long-only setting, the analogous Markowitz objective can be written as
\begin{equation*}
    \min_{w: w \geq 0,\ \|w\|_1 = 1}
    \frac{1}{2}w^\top K w - \lambda \beta^\top w,
\end{equation*}
where $\beta$ plays the role of expected asset returns, $K$ plays the role of the asset covariance matrix, and $w$ denotes portfolio weights. This objective balances a linear reward term, $\beta^\top w$, against a quadratic risk term, $w^\top K w$.

In our setting, the same algebraic structure arises from gradient matching. The linear term $\beta^\top w$ rewards target alignment, since $\beta_i = \langle g_i, g_{\mathrm{tar}}\rangle$ measures how well auxiliary dataset $i$ aligns with the target task. The quadratic term $w^\top K w$ penalizes redundancy, since $K_{ij} = \langle g_i, g_j\rangle$ measures pairwise similarity between auxiliary dataset gradients. Thus, assigning large weights to mutually similar datasets increases the quadratic penalty, analogous to allocating heavily to correlated assets in a portfolio. The key difference is that, unlike long-only portfolio optimization, our valuation scores are signed and constrained by an $\ell_1$ budget, allowing negative weights to represent harmful or negatively transferring datasets. 

\subsection{Statistical and Computational Analysis}
\paragraph{Finite-sample estimation error of valuation scores.}
Since our scores are computed from preview gradients, we quantify the error induced by finite preview samples. Let 
\begin{align*}
w^* & :=\arg\min_{\|w\|_1\le k} \frac{1}{2}w^\top K w-\beta^\top w, \\
\hat w &: =\arg\min_{\|w\|_1\le k} \frac{1}{2}w^\top \hat K w-\hat\beta^\top w,
\end{align*}
where $K_{ij}=\langle g_i,g_j\rangle$, $\beta_i=\langle g_i,g_{\mathrm{tar}}\rangle$, and their empirical counterparts are computed using preview estimates $\hat g_i=m^{-1}\sum_{j=1}^m\nabla\ell_\theta(x_{ij};\theta_0)$.

\begin{theorem}[Estimation error of finite-sample valuation scores]
\label{thm:finite-sample-valuation}
Assume there exist universal constants $C$ and $\mu$ such that  $\|\nabla\ell_\theta(x;\theta_0)\|_2\le C$ for all $x$, and
$\lambda_{\min}(K)\ge\mu>0$. If each preview set has size $m$, then with probability at least $1-\delta$,
\begin{equation*}
    \|\hat w-w^*\|_2
    =
    \widetilde O\left(
        \frac{kN}{\mu\sqrt m}
    \right),
\end{equation*}
where $\widetilde O(\cdot)$ hides logarithmic factors in $N,d,1/\delta$, and
$C$ is treated as a universal constant.
\end{theorem}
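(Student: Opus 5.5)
The argument has three layers: a perturbation bound for strongly convex programs over a convex set, concentration of the preview gradients, and propagation of that concentration through $K$ and $\beta$. Write $f(w)=\tfrac12 w^\top K w-\beta^\top w$ and $\hat f(w)=\tfrac12 w^\top\hat K w-\hat\beta^\top w$, both minimized over the convex set $\mathcal{W}_k=\{\|w\|_1\le k\}$. Since $\lambda_{\min}(K)\ge\mu$, $f$ is $\mu$-strongly convex, so $w^*$ is unique. We only need strong convexity of $f$, not of $\hat f$; $\hat f$ is convex because $\hat K$ is a Gram matrix, which is all we require of $\hat w$.

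\textbf{Step 1: perturbation lemma.} Let $\Delta=\hat w-w^*$. The optimality conditions on $\mathcal{W}_k$ give
\[
\langle \nabla f(w^*),\hat w-w^*\rangle\ge 0,\qquad \langle\nabla\hat f(\hat w),w^*-\hat w\rangle\ge0 .
\]
Adding these,
\[
\langle \nabla f(\hat w)-\nabla f(w^*),\Delta\rangle\le\langle \nabla f(\hat w)-\nabla\hat f(\hat w),\Delta\rangle .
\]
The left side is at least $\mu\|\Delta\|_2^2$. The right side equals $\langle (K-\hat K)\hat w-(\beta-\hat\beta),\Delta\rangle$. By Cauchy--Schwarz,
\[
\|\Delta\|_2\le \frac{1}{\mu}\bigl(\|K-\hat K\|_{2}\,\|\hat w\|_2+\|\beta-\hat\beta\|_2\bigr)\le\frac{1}{\mu}\bigl(k\|K-\hat K\|_2+\|\beta-\hat\beta\|_2\bigr),
\]
using $\|\hat w\|_2\le\|\hat w\|_1\le k$.

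\textbf{Step 2: concentration of the preview gradients.} Each $\hat g_i$ is an average of $m$ i.i.d.\ vectors in $\mathbb{R}^d$ of norm at most $C$. A vector Hoeffding/Bernstein inequality (e.g.\ Pinelis) gives, with probability $1-\delta/(N+1)$,
\[
\|\hat g_i-g_i\|_2\le \epsilon:=C\sqrt{2\log(2(N+1)/\delta)/m}.
\]
A coordinatewise bound with a union over $d$ coordinates would work equally well, which is why the statement allows logarithmic factors in $d$. We also need $\hat g_{\mathrm{tar}}$. The theorem's $\hat\beta$ implicitly uses an estimate of the target gradient, and we treat it either as exact or as estimated from $m$ samples with the same bound. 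A union bound over all $N+1$ vectors then holds everything simultaneously with probability $1-\delta$. In addition, $\|g_i\|,\|\hat g_i\|\le C$.

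\textbf{Step 3: propagate to $K$ and $\beta$.} The entrywise error satisfies
\[
|\hat K_{ij}-K_{ij}|\le|\langle\hat g_i-g_i,\hat g_j\rangle|+|\langle g_i,\hat g_j-g_j\rangle|\le 2C\epsilon .
\]
Bounding the operator norm by the Frobenius norm gives $\|\hat K-K\|_2\le 2CN\epsilon$. Similarly $|\hat\beta_i-\beta_i|\le 2C\epsilon$, so $\|\hat\beta-\beta\|_2\le 2C\sqrt N\epsilon$. Plugging these into Step 1,
\[
\|\Delta\|_2\le\frac{2C\epsilon(kN+\sqrt N)}{\mu}=\widetilde O\!\left(\frac{kN}{\mu\sqrt m}\right),
\]
which is the claimed rate for $k\ge1$.

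The main obstacle is Step 1. One has to use strong convexity of the population objective only, with the empirical minimizer appearing inside the gradient difference, and control $\|\hat w\|_2$ through the $\ell_1$ budget rather than through curvature of $\hat f$. This choice is exactly what produces the factor $k$ in the rate. The factor $N$ comes from the crude Frobenius bound in Step 3, which suffices for the stated result and needs no refinement.
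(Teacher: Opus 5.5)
Your argument is correct. Step 1 is the paper's residual-decomposition lemma almost verbatim: strong convexity of the population objective only, the two variational inequalities at $w^*$ and $\hat w$, Cauchy--Schwarz, and $\|\hat w\|_2\le\|\hat w\|_1\le k$. Where you genuinely differ is the concentration step.

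\textbf{How the paper does it.} It stacks the errors into $\Delta=\hat G-G\in\mathbb{R}^{N\times d}$. It writes $\Delta$ as the sum of the $Nm$ independent matrices $\frac1m e_i Z_{ij}^\top$, where $Z_{ij}=\nabla\ell_\theta(x_{ij};\theta_0)-g_i$. Matrix Bernstein then gives $\|\Delta\|_2\lesssim C\sqrt{N\log((N+d)/\delta)/m}$. Finally it propagates this through $\hat K-K=G\Delta^\top+\Delta G^\top+\Delta\Delta^\top$ with $\|G\|_2\le\sqrt N C$, and through $\|\hat\beta-\beta\|_2\le\|\Delta\|_2\|g_{\mathrm{tar}}\|_2$.

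\textbf{How you do it.} You concentrate each $\hat g_i$ separately with a dimension-free Hilbert-space inequality and union-bound over the $N+1$ vectors. You then bound $\hat K-K$ entrywise and pass to the operator norm via Frobenius.

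\textbf{Comparison.} Both routes give the same leading term $kN\sqrt{\log(\cdot)/m}/\mu$. The paper uses the crude variance proxy $v\le 4NC^2/m$, so its spectral bound on $\Delta$ is no better than $\|\Delta\|_F\le\sqrt N\,\epsilon$. Your route is more elementary and in some respects stronger:
\begin{itemize}
\item the logarithm involves only $N/\delta$, not $d$;
\item it needs independence only within each preview, not across datasets;
\item you use $\|\hat g_j\|_2\le C$ directly instead of expanding $\hat G=G+\Delta$, so no quadratic $\varepsilon_\Delta^2$ term appears and there is nothing to absorb via $m\gtrsim\log((N+d)/\delta)$.
\end{itemize}
What the paper's route buys is the explicit variance term $\|\textstyle\sum_i\mathbb{E}[Z_{ij}Z_{ij}^\top]\|_2$. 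When the gradient noise of different datasets is spread over many directions, this term can be $O(C^2)$ rather than $O(NC^2)$. That would sharpen $\|\Delta\|_2$ by a factor $\sqrt N$ and the final rate to $k\sqrt N/(\mu\sqrt m)$. Your entrywise/Frobenius argument cannot see this refinement.

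\textbf{Two small corrections.}
\begin{itemize}
\item Your aside that a coordinatewise bound with a union over the $d$ coordinates would work equally well is false. The natural version applies Hoeffding per coordinate, takes a union over $d$, and sums the squares. That gives $\|\hat g_i-g_i\|_2\lesssim C\sqrt{d\log(d/\delta)/m}$, which is polynomial in $d$. The $\log d$ the statement tolerates comes from the $(N+d)$ dimensional factor in matrix Bernstein. Your Step 2 works precisely because the vector inequality is dimension-free, so drop that remark.
\item The centered summands $Z_{ij}$ have norm up to $2C$, not $C$, so your $\epsilon$ should carry $2C$. This changes only constants.
\end{itemize}
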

\paragraph{Remark.}
This implies that the valuation scores are stable under finite preview
sampling when the source-gradient Gram matrix is well-conditioned. The error decays at the usual Monte Carlo rate in the preview size $m$, scales inversely with the curvature parameter $\mu$, and grows with the number of candidate datasets $N$ and the number of selected datasets $k$ through the factor $kN$. The key idea of the proof relies on the observation that the $\arg\min$ operator of a strongly convex quadratic program is stable under perturbations of both the quadratic and linear terms, so it suffices to analyze the perturbation errors of $\|K - \hat{K}\|_2$ and $\|\beta - \hat{\beta}\|_2$ directly instead. See full proof in \cref{app:finite-sample-stability}.

\vspace{-0mm}
\paragraph{Computational Complexity.}
\cref{tab:complexity} summarizes the computational and memory complexity of
dataset valuation methods. Computing dataset-level gradients
$\{g_i\}_{i=1}^N$ and the target gradient $g_{\mathrm{tar}}$ requires
$O(Nd)$ time and memory when gradients are materialized explicitly; in practice,
compressed representations (e.g., random projections~\citep{hu2025grass} or low-rank adapters) can
substantially reduce the effective dimensionality $d$ without affecting relative
alignment. Optimizing KMM in gradient space using first-order methods incurs
$O(Nd)$ time per iteration, for a total cost of $O(TNd)$ over $T$ iterations,
with $O(Nd)$ memory to store the gradient matrix. In contrast, the Gram-space
formulation constructs a Gram matrix $K \in \mathbb{R}^{N \times N}$ at a cost of
$O(N^2 d)$ time and $O(N^2)$ memory, followed by solving a convex quadratic
program with worst-case cost $O(N^3)$. In our setting, the number of datasets
$N$ is typically small (tens to hundreds), while the gradient dimensionality $d$
can be extremely large (billions), making both formulations practical.

\begin{table}[htbp]
\centering
\caption{Computational and memory complexity of KMM vs. gradient methods with $N \ll d$.}
\label{tab:complexity}
\setlength{\tabcolsep}{1.5pt}
\scalebox{0.95}{
\begin{tabular}{lcc}
\toprule
Method & Time & Memory \\
\midrule
Gradient Alignment
& $O(Nd)$
& $O(Nd)$ \\

KMM Gradient Space 
& $+ \ O(TNd)$
& $+ \ O(Nd)$ \\

KMM Gram Space 
& $+ \ O(N^2d + N^3)$
& $+ \ O(N^2)$ \\
\bottomrule
\end{tabular}
}
\end{table}

\section{Experiments}

\subsection{Dataset Valuation Experimental Setup}

We evaluate dataset valuation across post-training settings, using multilingual transfer as the primary controlled benchmark and later extending to other tasks to test generalization beyond multilingual settings.

\paragraph{Training.} We evaluate two post-training paradigms with LoRA \citep{hu2022lora} rank $r=8$ by default. For multilingual SFT, we use the Aya Dataset \citep{singh2024aya} and select 26 languages that are supported by the underlying base model. For multilingual RL-GRPO, we use the GSM8K source split from Big-Math \citep{albalak2025bigmathlargescalehighqualitymath} and translate problem prompts into 6 languages. We also include instruction-following experiments with math, code, and general instruction-tuning auxiliaries to test generalization beyond language-based tasks. We leave training dataset selection (\cref{app:train_datasets}), hyperparameter details (\cref{tab:sft_key_hparams},\cref{tab:grpo_key_hparams}), and the effect of LoRA rank $r$ (\cref{fig:allocation_lora_bars}) to the Appendix.

\paragraph{Baselines.}
We compare our gradient method against several baselines, including DataModel \citep{ilyas2022datamodels} and a compressed sensing (CS) variant reframed into dataset level valuation methods, as well as SOTA method GradEx-Forward-Selection (FS) and GradEx-Random-Ensemble (RE) \citep{li2024scalable} applied in a highly similar dataset selection setup. DataModel is a linear regression model which requires a large amount of evaluation metrics labels from training on randomly sampled auxiliary subsets. GradEx estimates dataset values through fast linear approximations to training dynamics and random projections for dimensionality reduction, yet it depends on specific properties of the cross-entropy loss, making them only applicable to SFT. We defer more baseline method details to \cref{app:baselines}. We use cosine similarity for One-Step gradient and Task Vectors (TV) to remove the effect of gradient norms. 

\paragraph{Evaluation Protocol.}

Given valuation scores over auxiliary datasets, according to \cref{eq:score_based_selection}, we evaluate each method by
constructing auxiliary subsets of varying sizes $k$ using a
greedy selection of datasets with strictly positive scores, and measuring the
resulting post-training performance on the target task. To summarize performance
across budget levels, we report (i) the best performance achieved over all $k$,
reflecting a method’s maximum attainable utility, and (ii) Borda count to aggregate rankings across $k$ values. To eliminate the confounding effect
that selecting more data can trivially improve performance, we additionally
introduce a \emph{fixed-compute} evaluation protocol: the total number of
optimizer updates is held constant across all values of $k$. By default, we
allocate a fixed fraction of training steps to auxiliary data and distribute
these steps uniformly across the selected auxiliary datasets (see \cref{app:additional} for sensitivity analysis for uniform design choice). This mixing
strategy ensures that adding more auxiliary datasets redistributes, rather than
increases, the auxiliary training budget. See full evaluation procedure in
\cref{app:pseudocode_eval}.

For multilingual evaluation, we use MMMLU \citep{dac2023okapi} as the target metric for supervised fine-tuning to evaluate general multilingual capability, and MGSM \citep{shi2022language}, the multilingual extension of GSM8K \citep{cobbe2021gsm8k}, as the target metric for multilingual mathematical reasoning. For instruction following, we use strict instruction-level accuracy with IFEval \citep{zhou2023instruction}. Evaluation task statistics are in \cref{tab:eval_tasks}.
\subsection{Main Results}

\paragraph{KMM delivers strong and consistent gains across post training algorithms.} Across both GRPO (\cref{tab:grpo_results_thai}) and SFT (\cref{tab:multilingual_results}) evaluations, incorporating KMM
consistently improves performance over the corresponding gradient-only baseline,
for both the best-$k$ and average metrics. This improvement holds uniformly across
all target languages, including Danish, Marathi, and Thai. Beyond these paired comparisons, KMM-enhanced methods achieve the strongest
best-$k$ performance among all evaluated baselines, while their average
performance is either comparable to or ranks second to the best-performing
method.

\begin{table}[ht]
\centering
\caption{GRPO performance comparison across valuation baselines on Thai with \emph{Qwen2.5-1.5B-Instruct} as the pretrained model.}
\label{tab:grpo_results_thai}
\setlength{\tabcolsep}{3.5pt}
\begin{tabular}{l|ccccc|c}
\toprule
\multicolumn{7}{c}{\textbf{Target = Thai}} \\
\midrule
Method & $k{=}1$ & $k{=}2$ & $k{=}3$ & $k{=}4$ & $k{=}5$ & Borda \\
\midrule
One Step
& 35.6 & 32.8 & \scorecell{25}{{39.2}} & 34.8 & 31.2
& \avgcell{10}{6} \\

\;\; + KMM
& 37.6 & 37.6 & 35.6 & \scorecell{45}{\textbf{41.6}} & 37.2
& \avgcell{45}{\textbf{14}} \\

TV
& 36.8 & \scorecell{30}{{39.6}} & 37.6 & 34.4 & 34.8
& \avgcell{30}{11} \\

\;\; + KMM
& \scorecell{40}{\underline{40.4}} & 38.0 & 37.2 & 37.2 & 34.4
& \avgcell{35}{\underline{12}} \\
\midrule

Random
& 30.8 & 34.8 & 32.0 & \scorecell{26}{{38.4}} & 36.8
& \avgcell{18}{7} \\
\bottomrule
\end{tabular}
\end{table}

\begin{table}[ht]
\centering
\caption{SFT performance comparison across methods for Danish and Marathi as target with \emph{Qwen3-1.7b} as the pretrained model. Blue shading denotes the best-$k$ performance per baseline, while orange shading denotes the Borda count aggregated across all $k$ values (higher is better). Fractional Borda scores arise from ties at a given $k$, where tied methods split the corresponding rank points.}
\scalebox{0.9}{
\setlength{\tabcolsep}{2.6pt}
\label{tab:multilingual_results}
\begin{tabular}{l|ccccc|c}
\toprule
\multicolumn{7}{c}{\textbf{Target = Danish}} \\
\midrule
Method & $k{=}5$ & $k{=}10$ & $k{=}15$ & $k{=}20$ & $k{=}25$ & Borda \\
\midrule
One Step
& 46.05 & 46.05 & \scorecell{25}{46.11} & 46.06 & 45.98
& \avgcell{27}{17.5} \\

\;\; + KMM
& \scorecell{40}{46.21} & 46.10 & 46.15 & 46.02 & 46.08
& \avgcell{45}{\textbf{29.0}} \\

TV
& 46.05 & \scorecell{30}{46.15} & 46.08 & 46.05 & 45.98
& \avgcell{26}{17.0} \\

\;\; + KMM
& 46.05 & \scorecell{40}{\textbf{46.24}} & 46.10 & 46.05 & 45.98
& \avgcell{31}{20.0} \\ \midrule

DataModel
& \scorecell{18}{45.99} & 45.99 & 45.99 & 45.99 & 45.99
& \avgcell{11}{7.0} \\

DataModel-CS
& \scorecell{30}{46.19} & 46.09 & 46.09 & 46.09 & 46.09
& \avgcell{43}{\underline{27.5}} \\

GradEX-FS
& 46.11 & 46.09 & 46.12 & \scorecell{30}{46.14} & 45.98
& \avgcell{38}{24.5} \\

GradEX-RE
& 46.11 & 46.02 & 46.18 & \scorecell{38}{\underline{46.22}} & 45.98
& \avgcell{39}{25.0} \\ \midrule

Random
& 45.91 & 46.04 & 46.02 & \scorecell{26}{46.15} & 45.98
& \avgcell{19}{12.5} \\
\bottomrule

\toprule
\multicolumn{7}{c}{\textbf{Target = Marathi}} \\
\midrule
Method & $k{=}5$ & $k{=}10$ & $k{=}15$ & $k{=}20$ & $k{=}25$ & Borda \\
\midrule
One Step
& 34.44 & 34.42 & \scorecell{25}{34.47} & 34.40 & 34.47
& \avgcell{26}{19.0} \\

\;\; + KMM
& 34.44 & 34.42 & 34.53 & 34.56 & \scorecell{32}{\underline{34.58}}
& \avgcell{42}{31.0} \\

TV
& \scorecell{25}{34.49} & 34.26 & 34.37 & 34.39 & 34.47
& \avgcell{24}{17.5} \\

\;\; + KMM
& 34.49 & 34.43 & 34.54 & 34.45 & \scorecell{38}{\textbf{34.60}}
& \avgcell{45}{\textbf{33.5}} \\ \midrule

DataModel
& 34.51 & \scorecell{40}{{34.52}} & 34.52 & 34.52 & 34.52
& \avgcell{44}{\underline{33.0}} \\

DataModel-CS
& \scorecell{18}{{34.03}} & 34.03 & 34.03 & 34.03 & 34.03
& \avgcell{0}{0.0} \\

GradEX-FS
& 34.38 & 34.16 & 34.32 & 34.42 & \scorecell{25}{{34.47}}
& \avgcell{17}{13.0} \\

GradEX-RE
& 34.25 & 34.18 & 34.31 & 34.42 & \scorecell{25}{{34.47}}
& \avgcell{15}{11.5} \\ \midrule

Random
& 34.22 & 34.54 & 34.32 & \scorecell{30}{{34.55}} & 34.47
& \avgcell{29}{21.5} \\
\bottomrule
\end{tabular}
}
\end{table}

\begin{figure}[ht]
    \centering
    \includegraphics[width=\linewidth]{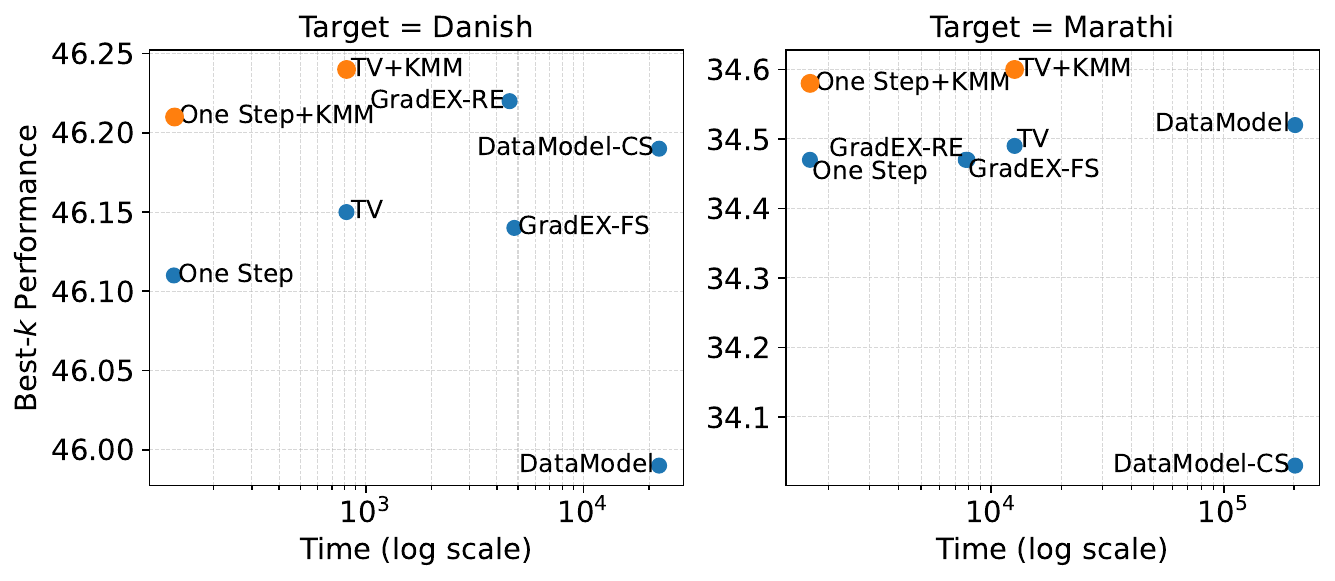}
    \caption{Best-$k$ performance versus wall clock time (log scale) for different data valuation methods on Danish (left) and Marathi (right). KMM-based methods are highlighted.}
    \label{fig:time_against_scores}
\end{figure}

\vspace{-1mm}
\paragraph{KMM achieves near-optimal performance at 2 orders of magnitude lower cost.} \cref{fig:time_against_scores} illustrates the trade-off between
computational cost and selection quality across different data valuation
methods. DataModel-based approaches achieve the best average performance in
\cref{tab:multilingual_results} but incur prohibitive computational overhead due
to repeated retraining on sampled subsets. This cost becomes especially
pronounced for post-training settings based on RL, where a
single training run can require substantially more time and compute. In contrast, KMM-enhanced gradient methods match this
performance while achieving over $100\times$ lower runtime, and its overhead over its base gradient methods is negligible. GradEX methods partially alleviate the cost by relying on linearized
models and first-order approximations, but their performance remains suboptimal.
As discussed in \cref{tab:complexity}, when $N \ll d$, incorporating KMM introduces only
negligible additional overhead relative to gradient-only baselines. Empirically,
this places KMM-based methods closer to the left-top Pareto-optimal region in \cref{fig:time_against_scores}.

\begin{figure}[htbp]
    \centering
    \includegraphics[width=\linewidth]{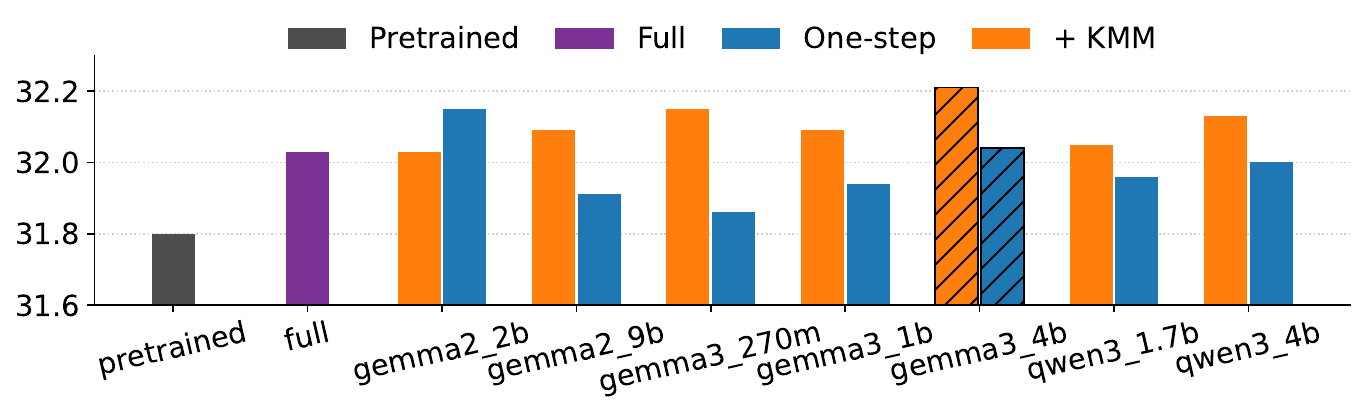}
    \caption{Transferability of auxiliary-task rankings across models. We evaluate whether auxiliary-dataset valuation scores computed using different pretrained models can be transferred to a model of interest, \emph{Gemma3-4B}, for dataset selection. Bars report MMLU-Malayalam performance after fine-tuning Gemma3-4B using subsets selected by $\hat{w}_j$ derived from various source models $\theta^\text{pub}$ on x-axis. Full refers to SFT using full Malayalam target training set.}
    \label{fig:model_size_transfer}
\end{figure}

\paragraph{KMM enables transferrable dataset valuation.} \cref{fig:model_size_transfer} demonstrates that auxiliary-task rankings computed on other models can be effectively reused for dataset selection on a new target model, even when the source models differ in size, version, or training paradigm (Gemma-base vs. Qwen-Instruct). As expected, rankings derived from Gemma3-4B itself yield the strongest best-$k$ performance; yet KMM consistently improves over one-step selection in all settings, and always surpasses full fine-tuning on the entire target dataset. These results imply auxiliary-task information captured by KMM is highly transferable, enabling dataset selection without direct access to the future target model. 

\paragraph{KMM enables scaling to larger models.}
We conduct the instruction-following experiments by fine-tuning Llama-3.2-1B using instruction following as target task. Candidate auxiliary datasets consist of seven diverse instruction-tuning sources spanning math, code, and broad multi-task data. Following the cross-model transfer setup in \cref{fig:model_size_transfer}, valuation scores are computed once on the 1B model and then transferred to larger models. We compare TV with TV+KMM on Llama 3.2 1B, Llama 3.1 8B, Gemma 3 12B, and Llama 3.1 70B. As shown in \cref{tab:model-scaling-tv-kmm}, KMM consistently improves over the TV baseline across all model sizes. Importantly, these gains are obtained even though the valuation step is performed only on the 1B model. This indicates that practitioners can run the lightweight dataset-valuation stage on a smaller model and transfer the selected datasets to fine-tune substantially larger models, supporting scalability under limited compute budgets beyond the multilingual setting.

\begin{table}[t]
\centering
\caption{Best-\(k\) performance across model architectures, using instruction-following as target task. See \cref{app:datasets} for task details.}
\label{tab:model-scaling-tv-kmm}
\begin{tabular}{lccc}
\toprule
\textbf{Model} & \textbf{TV} & \textbf{TV+KMM} & \textbf{Gain} \\
\midrule
Llama-3.2-1B          & \(26.26\) & \(\mathbf{27.70}\) & \(+1.44\) \\
Llama-3.1-8B          & \(52.64\) & \(\mathbf{53.00}\) & \(+0.36\) \\
Gemma-3-12B           & \(47.00\) & \(\mathbf{49.04}\) & \(+2.04\) \\
Llama-3.1-70B & \(70.10\) & \(\mathbf{71.20}\) & \(+1.10\) \\
\bottomrule
\end{tabular}
\end{table}

\subsection{Hyperparameter Analysis}

\begin{figure}[htbp]
    \centering
    \vspace{-3mm}
    \includegraphics[width=\linewidth]{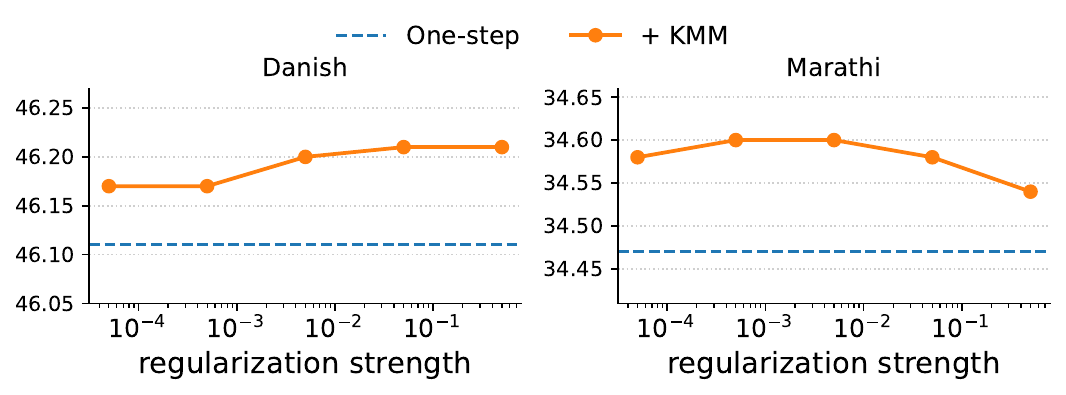}
    \caption{Effect of the KMM regularization parameter on downstream performance for Danish and Marathi. Solid orange curves report MMMLU metrics obtained by fine-tuning with auxiliary datasets selected using KMM under varying regularization strengths, while the dashed blue line denotes the One-Step baseline. High strength values lead to higher sparsity in the final scores.}
    \label{fig:l1_comparison}
\end{figure}

\paragraph{Robustness of KMM advantage to regularization strength.} In \cref{fig:l1_comparison}, we study KMM best-$k$ performance under the Lagrangian $\ell_1$-penalized formulation (\cref{eq:kmm_l1_penalized}) by varying the regularization strength. KMM exhibits low sensitivity to regularization strength, achieving robust improvements over One-Step selection baseline across multiple orders of magnitude in \cref{fig:l1_comparison}, although the best regularization factor can vary between different target languages. 

\begin{figure}[htbp]
    \centering
    \includegraphics[width=\linewidth]{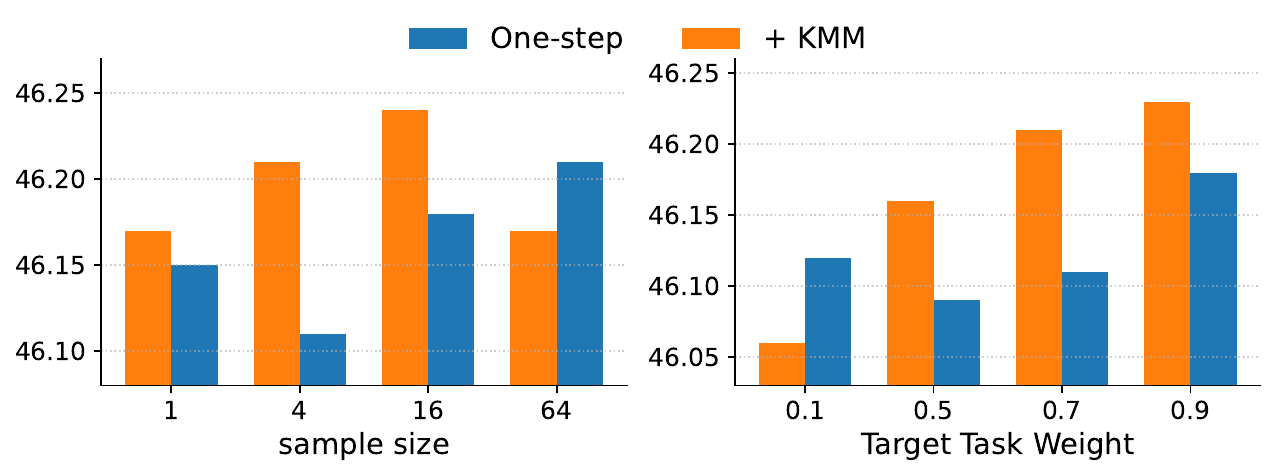}
    \caption{One-Step vs. KMM-based selection's MMLU-Danish best-$k$ performance under varying numbers of auxiliary task examples available for data valuation methods (left) and predefined target task weights within training batches (right).}
    \label{fig:kshot_ratio}
\end{figure}

\paragraph{KMM is most effective when target signal remains informative.}
In \cref{fig:kshot_ratio}, KMM yields consistent gains over One-Step selection
across a range of target data regimes. Performance advantages disappear
in extreme settings where auxiliary data overwhelmingly dominates the target
signal, such as when a very large number of auxiliary examples per task is used
(e.g., $|\tilde{D}_i| = 64$) or when the target task weight is extremely small
(e.g., 0.1). In these regimes, the combined update direction is largely determined
by auxiliary gradients, leaving limited scope for selective reweighting to improve target performance. KMM provides its
strongest and most reliable improvements in regimes more
representative of practical scoring and selection scenarios, where auxiliary
data is informative but not overwhelming and the target task retains sufficient
influence to guide selection.

\subsection{Dataset Value Scoring Analysis}

\begin{table}[t]
\centering
\caption{
Agreement between surrogate scores to empirical full-training utility $u_{\mathrm{FT}}(\mathcal{S})$ over $56$ Danish auxiliary subsets $\mathcal{S}$.
}
\label{tab:subset-faithfulness}
\small
\setlength{\tabcolsep}{3pt}
\begin{tabular}{lccc}
\toprule
Method & Spearman $\rho$ & $p$-value & Top-10 Overlap
\\
\midrule
One-Step      & $-0.1746$ & $0.198$ & $0/10$ \\
One-Step+KMM  & $-0.1265$ & $0.353$ & $2/10$ \\
TV            & $+0.1810$ & $0.182$ & $2/10$ \\
TV+KMM & $\mathbf{+0.3960}$ & $\mathbf{0.003}$ & $\mathbf{4/10}$ \\
\bottomrule
\end{tabular}
\end{table}

\paragraph{KMM estimates training utility better.}
To directly test whether our surrogate rankings reflect true full-training utility, we conduct an experiment on Danish using $N=8$ auxiliary languages,
$\{\text{en}, \text{nl}, \text{sv}, \text{de}, \text{fr}, \text{es}, \text{ru}, \text{ta}\}$,
and enumerate all $\binom{8}{3}=56$ subsets of size $k=3$. For each subset $\mathcal{S}$, we compute its additive surrogate score as
$\sum_{i\in\mathcal{S}}\beta_i$ and its KMM-corrected surrogate score as
$\sum_{i\in\mathcal{S}}\beta_i-\frac{1}{2}\sum_{i,j\in\mathcal{S}}K_{ij}$,
using either One-Step or TV representations to define $\beta_i$ and $K_{ij}$. We compare these surrogate scores against empirical full-training utility
$u_{\mathrm{FT}}(\mathcal{S})=\mathcal{L}_{\mathrm{da}}^{\mathrm{val}}(\theta_{\mathrm{da\text{-}only}})-
\mathcal{L}_{\mathrm{da}}^{\mathrm{val}}(\theta_\mathcal{S}),$ where $\mathcal{L}_{\mathrm{da}}^{\mathrm{val}}$ is cross-entropy on the Danish validation set and $\theta_\mathcal{S}$ is obtained by fine-tuning on Danish plus subset $\mathcal{S}$. \cref{tab:subset-faithfulness} shows that TV+KMM achieves the strongest agreement with empirical full training utility and is the only surrogate with statistically significant positive rank correlation. The additive baselines exhibit weak agreement, while adding the KMM correction improves top-subset recovery. 

\begin{figure}[htbp]
    \centering
    \includegraphics[width=\linewidth]{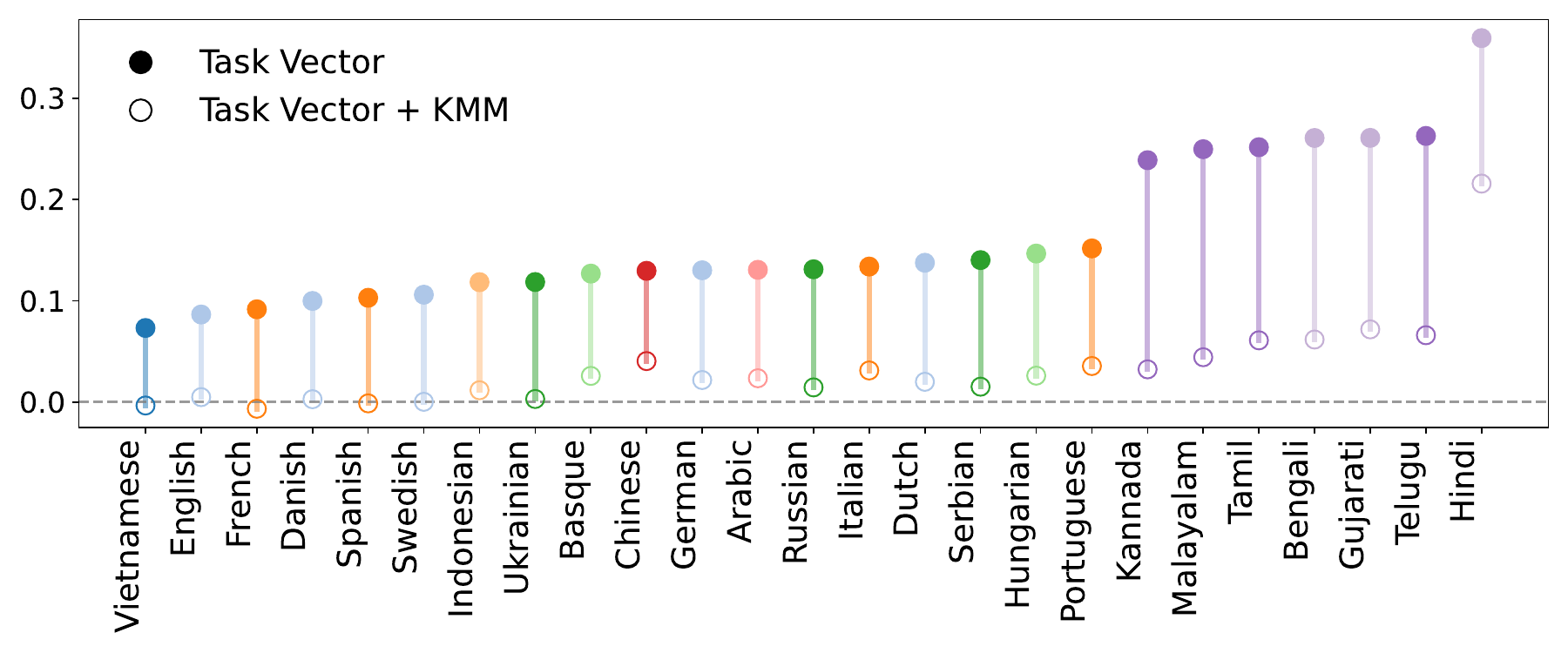}
    \caption{Task-Vector vs.\ KMM dataset valuation $w_i$ for all 25 auxiliary
languages using Marathi as the target task (higher values indicate larger
estimated positive contribution to target performance).
Filled markers denote TV scores without reweighting, while hollow markers show scores after applying KMM.
Colors indicate language subgroups from \citet{singh2024aya}.
Languages on the x-axis are sorted by their TV scores in ascending order.
KMM compresses valuation magnitudes and alters the ranking. }
    \label{fig:marathi_tv}
\end{figure}

\paragraph{KMM reduces redundancy and promotes complementary tasks.} \cref{fig:marathi_tv} shows TV valuation scores for auxiliary languages when Marathi is the target task. Without reweighting, TV exhibit clear grouping structure: languages that are typologically or script-wise related to Marathi (e.g., Indo-Aryan, Devanagari) receive consistently higher valuation, while more distant languages form lower-valued clusters. This grouping indicates substantial redundancy among auxiliary tasks that share similar linguistic characteristics. Applying KMM significantly alters both the magnitude and ordering of valuation scores. KMM systematically compresses valuation scores, suppressing weak or spurious positive signals and pushing several auxiliary tasks toward zero or negative contribution. At the same time, seemingly unrelated auxiliary tasks such as Portuguese and Italian improve in relative rank, suggesting that KMM favors tasks that provide complementary information.

\begin{figure}[htbp]
\vspace{-0mm}
    \centering
    \includegraphics[width=\linewidth]{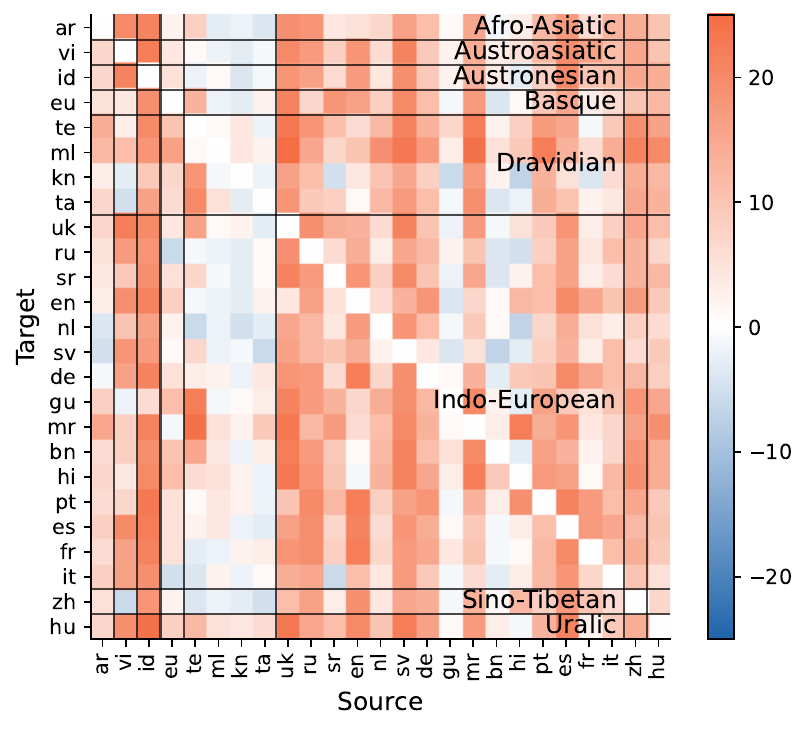}
    \caption{KMM-induced pairwise transfer structure across languages. Each cell shows the discrete signed transfer rank of a source language (column) for a target language (row), computed from KMM scores. Discrete scoring ranks are anchored at zero, with positive values indicating beneficial transfer, negative values indicating harmful transfer, and zero denoting neutral effect. Languages on both axes are grouped by linguistic family and ordered within each family by resource level (popularity).}
    \label{fig:pairwise_relation}
\end{figure}

\begin{table}[ht]
\caption{Effect of KMM-selected auxiliary languages on Chinese (zh) training
and downstream performance. \textbf{zh-pos} augments Chinese with auxiliary languages assigned positive KMM scores, while \textbf{zh-pos-neg} additionally includes negatively scored
auxiliary languages. Adding positively scored auxiliaries improves evaluation cross entropy (CE) and MMLU, but 
including negatively scored auxiliaries degrades evaluation and downstream performance, consistent with KMM’s transfer predictions in \cref{fig:pairwise_relation}.}
\centering
\setlength{\tabcolsep}{2.5pt}
\begin{tabular}{lcccc}
\hline
 & $\theta_0$ & $\theta_\text{tar}$ & \textbf{zh-\textcolor{myorange}{pos}} & \textbf{zh-\textcolor{myorange}{pos}-\textcolor{myblue}{neg}} \\
\hline
\rowcolor{rowgray}
Train CE & - & 2.8542 & 2.7132 & 1.7787 \\
Eval CE  & 3.5199 & 3.3881 &
3.3819\textcolor{myorange}{\scriptsize{\,(-0.0062)}} &
3.3871\textcolor{myblue}{\scriptsize{(\,+0.0052)}} \\
M\_MMLU  & 50.44 & 50.51 &
50.64\textcolor{myorange}{\scriptsize{\,(+0.13)}} &
50.49\textcolor{myblue}{\scriptsize{\,(-0.15)}} \\
\hline
\end{tabular}
\label{tab:zh_results}
\end{table}

\paragraph{KMM reveals directional and non-trivial transfer structure.} \cref{fig:pairwise_relation} illustrates the transfer structure identified by
KMM. Overall, transfer effectiveness is highly structured yet asymmetric. While
higher-resource languages within the same family generally provide stronger
auxiliary signals than lower-resource counterparts, KMM also reveals systematic
limitations, such as the consistently weak transfer from Dravidian languages,
even within their own family. Importantly, KMM uncovers auxiliary relationships
that are not explained by language family or data scale alone. For instance,
Indonesian emerges as a strong auxiliary source for a wide range of targets,
including several Indo-European languages, despite limited typological
similarity. These patterns highlight KMM’s ability to capture directional and
complementary transfer signals beyond simple heuristics.

\paragraph{KMM faithfully captures task relationships.}  \cref{tab:zh_results} empirically validates that KMM scores meaningfully reflect positive and negative cross-lingual transfer. When Chinese is used as the target language, we construct a positive auxiliary set consisting of high-resource Indo-European languages (en, sv, ru, es) and a negative set drawn from the Dravidian family (te, ml, kn, ta), following pattern in \cref{fig:pairwise_relation}. The results clearly support the KMM-based predictions for dataset values. Training with only positively scored languages (zh-pos) consistently improves both evaluation loss and MMMLU performance over target task only fine-tuned model $\theta_\text{tar}$. In contrast, augmenting the training set with negatively scored languages (zh-pos-neg) substantially reduces training loss but fails to translate into evaluation gains and degrades downstream performance.
\section{Related Work}

\paragraph{Data Selection for Language Model Training.}
The performance of LLMs is highly sensitive to training data
composition and quality \citep{hashimoto2021model}. Prior work on data selection
has primarily focused on pretraining, including data filtering
\citep{raffel2020exploring}, deduplication \citep{lee2022deduplicating}, and domain
mixture optimization. More recent approaches move beyond heuristic mixing by
explicitly learning domain weights. For example, DoReMi~\citep{xie2023doremi}
casts domain reweighting as a distributionally robust optimization problem
\citep{ben2013robust,oren2019distributionally}, while RegMix~\citep{liu2024regmix}
uses a regression-based surrogate model, inspired by DataModels
\citep{ilyas2022datamodels}, to predict downstream performance from data mixture
proportions. These methods are primarily evaluated in pretraining settings and
optimize next-token prediction loss.

Data selection for fine-tuning and post-training has also gained attention, with
most work focusing on instance-level selection. LESS~\citep{xia2024less} selects
instruction-tuning examples via low-rank gradient similarity, BIDS
\citep{dai2025improving} extends this approach using iterative influence-based
selection to balance multiple capabilities, and NICE~\citep{wang2025nice}
directly optimizes non-differentiable evaluation metrics during data selection.
While effective, instance-level methods can be computationally expensive and
noisy~\citep{dai2025improving}. In contrast, dataset- or domain-level selection
for LLM post-training remains relatively underexplored. The closest related work
is GradEX~\citep{li2024scalable}, which meta-trains across tasks and uses stored
projected gradients with linearity assumptions to efficiently estimate the
effect of fine-tuning on arbitrary task subsets.

\vspace*{-1em}
\paragraph{Multisource and Multidomain Transfer Learning.}
Our work is closely related to multi-source domain adaptation and multi-domain training. Classic results characterize when combining multiple source domains helps or hurts, motivating careful source weighting and selection \citep{mansour2008domain,sun2015survey}. Related but distinct, work on multi-task learning shows that auxiliary data can benefit low-resource targets through transfer \citep{mueller2020sources,mueller2024multi}, while also revealing negative transfer when irrelevant or dominant sources degrade performance \citep{sener2018multi,wang2023far}. Common mitigation strategies are model-side and include gradient-based methods \citep{yu2020gradient}, curriculum learning \citep{pentina2015curriculum,bengio2009curriculum}, and dynamic loss reweighting \citep{sener2018multi}. Overall, prior work either focuses on jointly optimizing multiple tasks or domains, or develops adaptation algorithms for combining multiple sources. In contrast, we study dataset-level selection in a multisource setting with a \emph{single} downstream objective, framing multisource training as a data-centric transfer problem.
\section{Conclusion}
We formulate dataset valuation as a budget-aware post-training problem and show that independent gradient-based scores fail to capture redundancy among auxiliary datasets. We propose a scalable KMM method that yields actionable valuations for dataset selection and weighting. Experiments across diverse post-training settings demonstrate consistent improvements over existing baselines with minimal overhead, establishing dataset valuation as a practical tool for cost-aware LLM development.

\section*{Acknowledgment}
Siqi Zeng and Han Zhao
are supported by a Meta grant and an NSF CAREER Award No.\ 2442290. The authors would like to
thank Yifei He and Meitong Liu for their discussion throughout the development of this work. This work was conducted as part of a research collaboration between UIUC and Meta. The views, opinions, and conclusions expressed in this paper are solely those of the authors and do not necessarily reflect the views or official positions of Meta Platforms, Inc.

\section*{Impact Statement}

This work aims to improve the efficiency and transparency of dataset selection during post-training of large language models. By providing a principled, redundancy-aware valuation signal, the proposed method can help practitioners make more informed decisions under fixed compute or data budgets, potentially reducing redundant training and data acquisition. We emphasize that the resulting scores are task-, model-, and metric-dependent, and do not represent intrinsic or universal dataset value. As with other data valuation methods, inappropriate use outside the intended context, such as for pricing or exclusion decisions without human oversight, may lead to misleading conclusions.



\bibliography{example_paper}
\bibliographystyle{icml2026}

\newpage
\appendix
\onecolumn
\section{Additional Related Work}
\paragraph{Data Market Modeling and Valuation-Aware Transactions.}
Several works study data marketplaces and data acquisition as economic or algorithmic problems for classic machine learning settings. Early efforts model two-sided data markets and pricing mechanisms for training data, focusing on incentive compatibility, quality
uncertainty, or signaling between buyers and sellers \citep{agarwal2019marketplace,chen2022selling}. Recent work begins to link data valuation to pricing in markets for foundation model training data. \citet{chen2023data} introduce the DAM challenge within DataPerf \citep{mazumder2023dataperf}, framing data acquisition as a benchmarked problem with simulated marketplaces and budget constraints, and empirically showing that gradient-based budgeting strategies can perform well across different market designs. \citet{zhang2025fairshare} propose a valuation-driven pricing framework for LLM data markets, modeling strategic interactions between buyers and sellers and showing that valuation-aligned pricing improves both
fairness and model performance. While closely related in spirit, this line of work primarily studies pricing mechanisms and equilibrium properties, rather than improving valuation signals to guide downstream budgeting decisions.

\section{Kernel Mean Matching for Covariate Shift}
\label{app:kmm-original}

KMM was originally proposed as a principled method for
handling covariate shift, where the training distribution differs from the test
distribution while the conditional label distribution remains unchanged
\citep{gretton2009covariate}.

Let $\mathcal{X}$ denote the input space and let
$\Phi : \mathcal{X} \rightarrow \mathcal{H}$ be a feature map into a reproducing
kernel Hilbert space (RKHS) $\mathcal{H}$ with associated kernel
\begin{equation*}
    k(x,x') = \langle \Phi(x), \Phi(x') \rangle_{\mathcal{H}}.
\end{equation*}
For any probability distribution $\mathcal{P}$ on $\mathcal{X}$, its kernel mean embedding
is defined as
\begin{equation}
    \mu(P) := \mathbb{E}_{x \sim \mathcal{P}}[\Phi(x)] \in \mathcal{H}.
\end{equation}

Consider a training distribution $\mathcal{P}_{\mathrm{tr}}$ and a test distribution
$\mathcal{P}_{\mathrm{te}}$. The goal of KMM is to estimate an importance weighting function
$\beta : \mathcal{X} \rightarrow \mathbb{R}_{+}$ such that the feature-space mean
of the reweighted training distribution matches that of the test distribution:
\begin{equation*}
    \mathbb{E}_{x \sim \mathcal{P}_{\mathrm{tr}}}[\beta(x)\Phi(x)]
    \approx
    \mu(\mathcal{P}_{\mathrm{te}}).
\end{equation*}
This leads to the following optimization problem:
\begin{align}
    \min_{\beta} \quad
    &
    \left\|
    \mu(\mathcal{P}_{\mathrm{te}})
    -
    \mathbb{E}_{x \sim \mathcal{P}_{\mathrm{tr}}}[\beta(x)\Phi(x)]
    \right\|_{\mathcal{H}}^{2}
    \label{eq:kmm_objective}
    \\
    \text{s.t.} \quad
    &
    \beta(x) \ge 0,
    \qquad
    \mathbb{E}_{x \sim P_{\mathrm{tr}}}[\beta(x)] = 1.
    \label{eq:kmm_constraints}
\end{align}
The constraints ensure that $\beta(x) \mathcal{\mathcal{P}}_{\mathrm{tr}}(x)$ defines a valid
probability distribution.

Under the assumption that $\mathcal{P}_{\mathrm{te}}$ is absolutely continuous with
respect to $\mathcal{P}_{\mathrm{tr}}$ and that the kernel is universal, the optimal
solution to \eqref{eq:kmm_objective},\eqref{eq:kmm_constraints} recovers the true
density ratio,
\begin{equation*}
    \beta^{*}(x) = \frac{\mathcal{P}_{\mathrm{te}}(x)}{\mathcal{P}_{\mathrm{tr}}(x)},
\end{equation*}
and achieves zero objective value. In practice, expectations are replaced by
empirical averages, yielding a finite-dimensional convex quadratic program that
can be efficiently solved.

\section{Finite Sample Bounds on Valuation Scores}
\label{app:finite-sample-stability}
Since our solution obtained through the preview gradients is a noisy version of the true gradient vector, now we want to explore how much error is introduced due to this estimation. 

For $\mathcal{W}_k = \{ w \in \mathbb{R}^N : \|w\|_1 \le k \}$, assume $w$ is the solution for the following constrained objective
\begin{equation}
w^* = \arg\min_{w \in \mathcal{W}_k} J(w) := \frac{1}{2}w^\top Kw - \beta^\top w, 
\label{eq:true-kmm}
\end{equation}
where $K_{ij} = \left<g_i, g_j\right>$ and $\beta_i = \left<g_i, g_\text{tar}\right>$ defined by the true expected gradient (\textit{e.g.,} $g_i = \nabla\mathcal{L}_i(\theta_0) = \mathbb{E}_{x\sim \mathcal{P}_i}[\nabla\ell_\theta(x;\theta_0)]$). Due to limited data access during data valuation stage, we solve the following objective instead with solution $\hat{w}$:
\begin{equation}
\hat{w} = \arg\min_{w \in \mathcal{W}_k} \hat{J}(w) := \frac{1}{2}w^\top \hat{K}w - \hat{\beta}^\top w, 
\label{eq:empirical-kmm}
\end{equation}
where $\hat{K}_{ij} = \left<\hat{g}_i, \hat{g}_j\right>$ and $\hat{\beta}_i = \left<\hat{g}_i, g_\text{tar}\right>$ defined by the empirical gradient collected on dataset preview with $\hat{g}_i = \frac{1}{m}\sum_{j=1}^m\nabla\ell_\theta(x_{ij};\theta_0)$ for \emph{i.i.d.} samples $x_{ij}$ sampled from preview $\tilde{D}_i$.

Compared to \cref{eq:kmm_qp}, without loss of generality, we assume $\lambda = 1$.

\begin{assumption}[Bounded Gradients]
\label{ass:bounded-gradients}
    Assume $\|\nabla \ell_\theta(x;\theta_0)\|_2 \leq C, \forall x$, then it implies $\|\hat{g}_i\|_2 \leq C, \|g_i\|_2 \leq C, \|g_\text{tar}\| \leq C$.
\end{assumption}

\begin{assumption}[Gram Matrix PD]
\label{ass:pd-K}
We assume the true Gram matrix $K$ is strictly positive definite with a minimum eigenvalue bounded below by a constant $\mu > 0$: i.e., $\lambda_{\min}(K) \geq \mu$. 
\end{assumption}

\paragraph{Proof Sketch.}
By the convexity argument in the residual decomposition lemma, the solution error $\|\hat{w} - w^*\|_2$ is reduced to controlling the empirical errors in $\hat K$ and $\hat\beta$. These errors come from replacing the true gradients $g_i$ by the preview gradients $\hat g_i$. Matrix Bernstein controls the corresponding gradient estimation error with probability at least $1-\delta$. Plugging this concentration event into the residual decomposition bound proves the theorem.

\begin{lemma}[Residual Decomposition]
\label{lem:residual_decomposition}
Assume $\lambda_{\min}(K) \geq \mu$. Then,
\begin{equation} \label{eq:stability_result_population}
    \|\hat{w} - w^*\|_2
    \leq
    \frac{1}{\mu}
    \|\nabla\hat{J}(\hat{w}) - \nabla J(\hat{w})\|_2
    \leq
    \frac{1}{\mu}
    \left(
        \|(\hat{K} - K)\hat{w}\|_2
        +
        \|\hat{\beta} - \beta\|_2
    \right).
\end{equation}
\end{lemma}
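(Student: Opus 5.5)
The plan is to combine the first-order optimality conditions of the two constrained quadratic programs with the strong convexity of the population objective $J$. Since $\mathcal{W}_k$ is convex and both $J$ and $\hat J$ are convex quadratics, $w^*$ and $\hat w$ satisfy the variational inequalities
\begin{equation*}
\langle \nabla J(w^*), w - w^*\rangle \ge 0, \qquad \langle \nabla \hat J(\hat w), w - \hat w\rangle \ge 0 \qquad \text{for all } w\in\mathcal{W}_k.
\end{equation*}
Here $\nabla J(w) = Kw - \beta$ and $\nabla \hat J(w) = \hat K w - \hat\beta$. Convexity of $\hat J$ is immediate because $\hat K$ is itself a Gram matrix and hence PSD, so $\hat w$ exists and is characterized by this inequality. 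Uniqueness of $\hat w$ is not needed; any minimizer works.

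I would instantiate the first inequality at $w=\hat w$ and the second at $w = w^*$, then add them. This gives
\begin{equation*}
\langle \nabla J(w^*) - \nabla\hat J(\hat w), \hat w - w^*\rangle \ge 0 .
\end{equation*}
Next, insert $\nabla J(\hat w)$ by writing $\nabla J(w^*) - \nabla \hat J(\hat w) = \bigl(\nabla J(w^*) - \nabla J(\hat w)\bigr) + \bigl(\nabla J(\hat w) - \nabla\hat J(\hat w)\bigr)$. The first bracket equals $-K(\hat w - w^*)$. Rearranging yields
\begin{equation*}
(\hat w - w^*)^\top K (\hat w - w^*) \le \langle \nabla J(\hat w) - \nabla \hat J(\hat w), \hat w - w^*\rangle .
\end{equation*}
The left side is at least $\mu\|\hat w - w^*\|_2^2$ by Assumption~\ref{ass:pd-K}, and Cauchy--Schwarz bounds the right side by $\|\nabla\hat J(\hat w) - \nabla J(\hat w)\|_2\,\|\hat w - w^*\|_2$. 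Dividing by $\|\hat w - w^*\|_2$ gives the first inequality of \eqref{eq:stability_result_population}; the case $\hat w = w^*$ is trivial.

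The second inequality follows by expanding $\nabla \hat J(\hat w) - \nabla J(\hat w) = (\hat K - K)\hat w - (\hat\beta - \beta)$ and applying the triangle inequality.

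The only delicate point is the sign bookkeeping when the two variational inequalities are combined. It is essential that strong convexity is taken from the population $K$ and not from $\hat K$, since $\hat K$ may be only PSD. That is why the perturbation term is evaluated at $\hat w$ rather than at $w^*$.
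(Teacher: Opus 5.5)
Your proof is correct and follows essentially the same route as the paper. Both combine the optimality conditions $\langle \nabla J(w^*), \hat w - w^*\rangle \ge 0$ and $\langle \nabla\hat J(\hat w), \hat w - w^*\rangle \le 0$ with strong convexity of the population objective $J$ (not $\hat J$), insert $\nabla J(\hat w)$, and finish with Cauchy--Schwarz and the triangle inequality. The only cosmetic difference is that the paper gets $\langle \nabla J(\hat w) - \nabla J(w^*), \hat w - w^*\rangle \ge \mu\|\hat w - w^*\|_2^2$ by summing two strong-convexity inequalities, while you read it off directly as $(\hat w - w^*)^\top K(\hat w - w^*) \ge \mu\|\hat w - w^*\|_2^2$, which is the same fact for a quadratic. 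One small precision: existence of $\hat w$ follows from compactness of $\mathcal{W}_k$, not from $\hat K \succeq 0$.
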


\begin{proof}
Since $\lambda_{\min}(K) \geq \mu$, we know $J$ is strongly convex, so
\begin{equation*}
    J(\hat{w}) - J(w^*) \geq
    \left<\nabla J(w^*), \hat{w} - w^*\right>
    +
    \frac{\mu}{2}\|\hat{w} - w^*\|_2^2,
\end{equation*}
\begin{equation*}
    J(w^*) - J(\hat{w}) \geq
    \left<\nabla J(\hat{w}), w^* - \hat{w}\right>
    +
    \frac{\mu}{2}\|\hat{w} - w^*\|_2^2.
\end{equation*}
Sum them up, we have,
\begin{equation}
    \left<\nabla J(\hat{w}) - \nabla J(w^*),
    \hat{w} - w^*\right>
    \geq
    \mu\|\hat{w} - w^*\|_2^2.
    \label{eq:strong-convexity-J}
\end{equation}

Since \cref{eq:true-kmm} and \cref{eq:empirical-kmm} are both convex problems, by first order optimality condition, we have,
\begin{equation}
    \left<\nabla\hat{J}(\hat{w}), \hat{w} - w^*\right> \leq 0,
    \label{eq:neg-convexity-hat}
\end{equation}
and similarly,
\begin{equation}
    \left<\nabla J(w^*), \hat{w} - w^*\right> \geq 0.
    \label{eq:pos-convexity-pop}
\end{equation}

Therefore, we can rewrite \cref{eq:strong-convexity-J} as:
\begin{align}
    \|\hat{w} - w^*\|_2^2
    &\leq
    \frac{1}{\mu}
    \left<\nabla J(\hat{w}) - \nabla J(w^*),
    \hat{w} - w^*\right> \notag\\
    &=
    \frac{1}{\mu}
    \left(
        \left<\nabla J(\hat{w}), \hat{w} - w^*\right>
        -
        \left<\nabla J(w^*), \hat{w} - w^*\right>
    \right) \notag\\
    &\leq
    \frac{1}{\mu}
    \left<\nabla J(\hat{w}), \hat{w} - w^*\right>
    \tag{\cref{eq:pos-convexity-pop}}\\
    &=
    \frac{1}{\mu}
    \left(
        \left<\nabla J(\hat{w}) - \nabla\hat{J}(\hat{w}),
        \hat{w} - w^*\right>
        +
        \left<\nabla\hat{J}(\hat{w}), \hat{w} - w^*\right>
    \right) \notag\\
    &\leq
    \frac{1}{\mu}
    \left<\nabla J(\hat{w}) - \nabla\hat{J}(\hat{w}),
    \hat{w} - w^*\right>
    \tag{\cref{eq:neg-convexity-hat}}\\
    &\leq
    \frac{1}{\mu}
    \|\nabla\hat{J}(\hat{w}) - \nabla J(\hat{w})\|_2
    \cdot
    \|\hat{w} - w^*\|_2 . \notag
\end{align}
Canceling the solution gap term from both sides, we have
\begin{equation}
    \|\hat{w} - w^*\|_2
    \leq
    \frac{1}{\mu}
    \|\nabla\hat{J}(\hat{w}) - \nabla J(\hat{w})\|_2.
\end{equation}

Now we plug in gradient values. Since both \cref{eq:true-kmm} and \cref{eq:empirical-kmm} share the same constraints, $w^*, \hat{w}$ are all feasible for 2 objectives. Then,
\begin{align}
    \|\nabla\hat{J}(\hat{w}) - \nabla J(\hat{w})\|_2
    &=
    \|(\hat{K}\hat{w} - \hat{\beta}) - (K\hat{w} - \beta)\|_2 \notag\\
    &\leq
    \|(\hat{K} - K)\hat{w}\|_2
    +
    \|\hat{\beta} - \beta\|_2. \notag
\end{align}
\end{proof}

We can first decompose the residual terms in \cref{eq:stability_result_population}.

Let
\begin{equation*}
    G =
    \begin{bmatrix}
        g_1^\top \\
        \vdots \\
        g_N^\top
    \end{bmatrix}
    \in \mathbb{R}^{N \times d},
    \qquad
    \hat{G} =
    \begin{bmatrix}
        \hat{g}_1^\top \\
        \vdots \\
        \hat{g}_N^\top
    \end{bmatrix}
    \in \mathbb{R}^{N \times d}.
\end{equation*}
Define
\begin{equation*}
\Delta := \hat{G} - G.
\end{equation*}
Then by definition of $K$ and $\hat{K}$, we have
\begin{align}
    \|\hat{K} - K\|_2
    &=
    \|\hat{G}\hat{G}^\top - GG^\top\|_2 \notag\\
    &=
    \|(G+\Delta)(G+\Delta)^\top - GG^\top\|_2 \notag\\
    &=
    \|G\Delta^\top + \Delta G^\top + \Delta\Delta^\top\|_2 \notag\\
    &\leq
    \|G\Delta^\top\|_2
    +
    \|\Delta G^\top\|_2
    +
    \|\Delta\Delta^\top\|_2 \notag\\
    &\leq
    \|G\|_2\|\Delta\|_2
    +
    \|\Delta\|_2\|G\|_2
    +
    \|\Delta\|_2^2 \notag\\
    &=
    2\|G\|_2\|\Delta\|_2
    +
    \|\Delta\|_2^2.
    \label{eq:K-decomposition-gradient-bound}
\end{align}
Similarly, by definition of $\beta$ and $\hat{\beta}$, we have
\begin{align}
    \|\hat{\beta} - \beta\|_2
    &=
    \|\hat{G}g_\text{tar} - Gg_\text{tar}\|_2 \notag\\
    &=
    \|(\hat{G}-G)g_\text{tar}\|_2 \notag\\
    &=
    \|\Delta g_\text{tar}\|_2 \notag\\
    &\leq
    \|\Delta\|_2\|g_\text{tar}\|_2.
    \label{eq:beta-decomposition-gradient-bound}
\end{align}
Therefore, to control the solution gap in \cref{eq:stability_result_population}, we need to find the upper bound for $\|\Delta\|_2$.

\begin{lemma}[Matrix Bernstein \citep{tropp2015introduction} Theorem 6.1.1]
\label{lem:matrix-bernstein}
Let $X_1, \dots, X_M$ be independent mean-zero random matrices in $\mathbb{R}^{N \times d}$. Suppose that there exists $L > 0$ such that 
\begin{equation}
    \|X_k\|_2 \leq L \qquad \text{almost surely for all } k \in[M].
\end{equation}
Define the variance parameter
\begin{equation}
    \sigma^2 := \max\left\{\left\|\sum_{k=1}^M\mathbb{E}[X_kX_k^\top]\right\|_2, \left\|\sum_{k=1}^M \mathbb{E}[X_k^\top X_k]\right\|_2\right\}.
\end{equation}
Then, for any $\delta \in (0,1)$, with probability at least $1 - \delta$,
\begin{equation}
    \left\|\sum_{k = 1}^M X_k\right\|_2 \leq \sqrt{2\sigma^2\log\frac{N+d}{\delta}} + \frac{2L}{3}\log\frac{N+d}{\delta}.
\end{equation}
\end{lemma}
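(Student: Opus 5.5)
We plan to prove the lemma by the standard matrix Laplace-transform argument of \citet{tropp2015introduction}. First we reduce the rectangular case to the Hermitian case, then we bound the trace moment generating function, and finally we invert the resulting tail bound into the stated high-probability form.

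\emph{Step 1 (Hermitian dilation).} For $X\in\mathbb{R}^{N\times d}$ define the symmetric $(N+d)\times(N+d)$ matrix
\[
\mathcal{H}(X)=\begin{bmatrix}0 & X\\ X^\top & 0\end{bmatrix}.
\]
This map is linear, so $\mathcal{H}(X_k)$ are independent and mean-zero. The eigenvalues of $\mathcal{H}(X)$ are $\pm$ the singular values of $X$, padded with zeros. Hence $\lambda_{\max}(\mathcal{H}(X))=\|X\|_2$, which gives $\|\mathcal{H}(X_k)\|_2\le L$ almost surely. Moreover $\mathcal{H}(X)^2=\mathrm{diag}(XX^\top,X^\top X)$, so $\bigl\|\sum_k\mathbb{E}\,\mathcal{H}(X_k)^2\bigr\|_2=\sigma^2$ exactly. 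Setting $Y=\sum_k\mathcal{H}(X_k)$, we get $\|\sum_k X_k\|_2=\lambda_{\max}(Y)$. It therefore suffices to bound the upper tail of $\lambda_{\max}(Y)$ for a sum of bounded, mean-zero symmetric matrices of dimension $N+d$.

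\emph{Step 2 (Laplace transform and cgf subadditivity).} By Markov's inequality applied to $e^{\theta\lambda_{\max}(Y)}\le\operatorname{tr}e^{\theta Y}$, for every $\theta>0$,
\[
\Pr[\lambda_{\max}(Y)\ge t]\le e^{-\theta t}\,\mathbb{E}\operatorname{tr}e^{\theta Y}.
\]
We then invoke Lieb's concavity theorem, in the form of Tropp's subadditivity of matrix cumulant generating functions:
\[
\mathbb{E}\operatorname{tr}\exp\Bigl(\sum_k\theta\mathcal{H}(X_k)\Bigr)\le\operatorname{tr}\exp\Bigl(\sum_k\log\mathbb{E}e^{\theta\mathcal{H}(X_k)}\Bigr).
\]
This is the one genuinely nontrivial ingredient, and we expect it to be the main obstacle. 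The matrix exponential does not split over sums of noncommuting matrices, so we will cite the theorem rather than reprove it.

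\emph{Step 3 (Bernstein mgf bound).} Write $A$ for a symmetric matrix with $\mathbb{E}A=0$ and $\|A\|_2\le L$, and fix $0<\theta<3/L$. The scalar inequality $e^x\le 1+x+\frac{x^2/2}{1-|x|/3}$ for $|x|<3$ lifts to matrices through the transfer rule, because every function here is a function of the single matrix $\theta A$. It gives
\[
\mathbb{E}e^{\theta A}\preceq I+g(\theta)\,\mathbb{E}A^2,\qquad g(\theta)=\frac{\theta^2/2}{1-\theta L/3}.
\]
Since $\log(I+M)\preceq M$ and $\log$ is operator monotone, we get $\log\mathbb{E}e^{\theta A}\preceq g(\theta)\mathbb{E}A^2$. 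Summing over $k$ and using monotonicity of $\operatorname{tr}\exp$, the trace is at most $(N+d)\,e^{g(\theta)\sigma^2}$. Taking $\theta=t/(\sigma^2+Lt/3)$ then yields
\[
\Pr\Bigl[\Bigl\|\sum_k X_k\Bigr\|_2\ge t\Bigr]\le (N+d)\exp\Bigl(\frac{-t^2/2}{\sigma^2+Lt/3}\Bigr).
\]

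\emph{Step 4 (inversion).} Let $\ell=\log\frac{N+d}{\delta}$ and set the right-hand side equal to $\delta$. This is the quadratic equation $t^2-\tfrac{2L\ell}{3}t-2\sigma^2\ell=0$, whose positive root is
\[
t=\tfrac{L\ell}{3}+\sqrt{(L\ell/3)^2+2\sigma^2\ell}.
\]
Applying $\sqrt{a+b}\le\sqrt a+\sqrt b$ gives $t\le\sqrt{2\sigma^2\ell}+\tfrac{2L}{3}\ell$. Because the tail bound is decreasing in $t$, the claimed bound holds with probability at least $1-\delta$.
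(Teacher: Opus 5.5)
Your proposal is correct and reproduces the standard argument behind the cited result (Tropp's Theorem 6.1.1): Hermitian dilation, Lieb-based cgf subadditivity, the Bernstein mgf bound, the choice $\theta=t/(\sigma^2+Lt/3)$, and then the Step 4 inversion, which is what turns Tropp's tail bound into the high-probability form stated here. The paper gives no proof of its own beyond the citation; the only loose end in your write-up is the degenerate case $\sigma^2=0$, where your $\theta$ reaches $3/L$ and $g(\theta)$ is undefined, but there every $X_k=0$ almost surely and the claim is trivial.
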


\begin{lemma}[Concentration of Preview Gradient Matrix]
    \label{lem:preview-gradient-matrix-concentration}
    For each $i\in[N]$ and $j\in[m]$, define
\begin{equation*}
    Z_{ij}
    :=
    \nabla\ell_\theta(x_{ij};\theta_0)-g_i
    \in \mathbb{R}^d.
\end{equation*}
Under \cref{ass:bounded-gradients}, for any $\delta\in(0,1)$, with probability at least $1-\delta$,
\begin{equation}
    \|\Delta\|_2
    \leq
    \varepsilon_\Delta(\delta),
    \label{eq:Delta-concentration-final}
\end{equation}
where
\begin{equation}
    \varepsilon_\Delta(\delta)
    :=
    \sqrt{
        2v\log\frac{N+d}{\delta}
    }
    +
    \frac{4C}{3m}\log\frac{N+d}{\delta},
    \label{eq:epsilon-Delta-final}
\end{equation}
and
\begin{equation}
    v
    :=
    \frac{1}{m}
    \max
    \left\{
        \max_{i\in[N]}\mathbb{E}\|Z_{ij}\|_2^2,
        \left\|
            \sum_{i=1}^N
            \mathbb{E}[Z_{ij}Z_{ij}^\top]
        \right\|_2
    \right\}.
    \label{eq:v-final}
\end{equation}
In particular, since $\|Z_{ij}\|_2\leq 2C$, one may take
\begin{equation*}
    v\leq \frac{4NC^2}{m}.
\end{equation*}
\end{lemma}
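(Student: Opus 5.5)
The plan is to apply \cref{lem:matrix-bernstein} to a suitable decomposition of $\Delta$. Write $\Delta=\hat G-G$, whose $i$-th row is $\frac1m\sum_{j=1}^m Z_{ij}^\top$. I will express it as the sum of $M=Nm$ independent random matrices,
\[
X_{ij}:=\tfrac1m\, e_i Z_{ij}^\top\in\mathbb{R}^{N\times d}, \qquad \Delta=\sum_{i=1}^N\sum_{j=1}^m X_{ij}.
\]
Here $e_i$ is the $i$-th standard basis vector of $\mathbb{R}^N$. The samples $x_{ij}$ are i.i.d. within each preview and independent across previews, and $\mathbb{E}[\nabla\ell_\theta(x_{ij};\theta_0)]=g_i$. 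So the $X_{ij}$ are independent and mean-zero.

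For the uniform bound, $\|X_{ij}\|_2=\frac1m\|Z_{ij}\|_2\le \frac{2C}{m}$, since both $\|\nabla\ell_\theta\|_2\le C$ and $\|g_i\|_2\le C$ by \cref{ass:bounded-gradients}. This gives $L=2C/m$ and hence the term $\frac{2L}{3}\log\frac{N+d}{\delta}=\frac{4C}{3m}\log\frac{N+d}{\delta}$.

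For the variance, compute the two second-moment sums separately.

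First, $X_{ij}X_{ij}^\top=\frac1{m^2}\|Z_{ij}\|_2^2\, e_ie_i^\top$. Summing over $j$ gives the diagonal matrix $\frac1m\,\mathrm{diag}(\mathbb{E}\|Z_{i1}\|_2^2)$, whose norm is $\frac1m\max_i\mathbb{E}\|Z_{ij}\|_2^2$.

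Second, $X_{ij}^\top X_{ij}=\frac1{m^2}Z_{ij}Z_{ij}^\top$ because $e_i^\top e_i=1$. Summing over $i,j$ gives $\frac1m\sum_i\mathbb{E}[Z_{ij}Z_{ij}^\top]$.

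Together these give exactly $\sigma^2=v$ as defined in \cref{eq:v-final}, and substituting into \cref{lem:matrix-bernstein} yields \cref{eq:epsilon-Delta-final}. For the cruder bound on $v$:
\begin{itemize}
\item the first term is at most $4C^2/m$;
\item the second is at most $\frac1m\sum_i\mathbb{E}\|Z_{ij}\|_2^2\le 4NC^2/m$ by the triangle inequality and $\|zz^\top\|_2=\|z\|_2^2$.
\end{itemize}
Hence $v\le 4NC^2/m$.

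The main point needing care is the identification of $\sigma^2$. In particular, $\mathbb{E}[Z_{ij}Z_{ij}^\top]$ does not depend on $j$, by the identical distribution within dataset $i$, so each $j$-sum contributes a factor $m\cdot\frac1{m^2}=\frac1m$. The rest is direct substitution.

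The theorem then follows by combining this lemma with \cref{lem:residual_decomposition}, \cref{eq:K-decomposition-gradient-bound} and \cref{eq:beta-decomposition-gradient-bound}, as follows.

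\begin{itemize}
\item Feasibility gives $\|\hat w\|_2\le\|\hat w\|_1\le k$.
\item The row norms give $\|G\|_2\le\|G\|_F\le\sqrt N C$.
\item For $m$ large enough, $\varepsilon_\Delta=\widetilde O(\sqrt{N/m})$ is at most a constant, so $\|\Delta\|_2^2$ is dominated by the linear term.
\end{itemize}
Therefore
\[
\|\hat w-w^*\|_2\le \frac1\mu\bigl(k(2\sqrt N C\varepsilon_\Delta+\varepsilon_\Delta^2)+C\varepsilon_\Delta\bigr)=\widetilde O\!\left(\frac{kN}{\mu\sqrt m}\right).
\]
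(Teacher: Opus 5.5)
Your proposal is correct and follows the paper's proof essentially line for line: the same decomposition $\Delta=\sum_{i,j}\frac{1}{m}e_iZ_{ij}^\top$, the same uniform bound $L=2C/m$, the same two second-moment computations identifying $\sigma^2=v$, and the same crude bound $v\le 4NC^2/m$. Your closing remarks on the theorem also match the paper's high-level rate argument. One small refinement there: what you actually need is $m\gtrsim\log\frac{N+d}{\delta}$, so that $k\varepsilon_\Delta^2$ is dominated by $2k\sqrt{N}C\varepsilon_\Delta$; requiring $\varepsilon_\Delta$ to be at most a constant is more than necessary.
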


\begin{proof}
For each $i\in[N]$ and $j\in[m]$, define
\begin{equation*}
    X_{ij}
    :=
    \frac{1}{m}e_i Z_{ij}^\top
    \in \mathbb{R}^{N\times d},
\end{equation*}
where $e_i$ is the $i$-th standard basis vector in $\mathbb{R}^N$. Then,
\begin{align*}
    \sum_{i=1}^N\sum_{j=1}^m X_{ij}
    &=
    \sum_{i=1}^N\sum_{j=1}^m
    \frac{1}{m}e_i
    \left(
        \nabla\ell_\theta(x_{ij};\theta_0)-g_i
    \right)^\top =
    \hat G-G
    =
    \Delta.
\end{align*}
Also, $\mathbb{E}[X_{ij}]=0$. By \cref{ass:bounded-gradients},
\begin{equation*}
    \|Z_{ij}\|_2
    \leq
    \|\nabla\ell_\theta(x_{ij};\theta_0)\|_2+\|g_i\|_2
    \leq
    2C.
\end{equation*}
Therefore,
\begin{equation*}
    \|X_{ij}\|_2
    =
    \frac{1}{m}\|e_iZ_{ij}^\top\|_2
    =
    \frac{1}{m}\|Z_{ij}\|_2
    \leq
    \frac{2C}{m}.
\end{equation*}

Now we compute the variance parameter. Since
\begin{align*}
    X_{ij}X_{ij}^\top
    &=
    \frac{1}{m^2}
    e_iZ_{ij}^\top Z_{ij}e_i^\top
    =
    \frac{1}{m^2}
    \|Z_{ij}\|_2^2 e_ie_i^\top,
\end{align*}
we have
\begin{align*}
    \left\|
        \sum_{i=1}^N\sum_{j=1}^m
        \mathbb{E}[X_{ij}X_{ij}^\top]
    \right\|_2
    &=
    \left\|
        \frac{1}{m}
        \sum_{i=1}^N
        \mathbb{E}\|Z_{ij}\|_2^2 e_ie_i^\top
    \right\|_2  = \frac{1}{m}
    \max_{i\in[N]}
    \mathbb{E}\|Z_{ij}\|_2^2.
\end{align*}
Similarly,
\begin{equation*}
    X_{ij}^\top X_{ij}
    =
    \frac{1}{m^2}Z_{ij}Z_{ij}^\top,
\end{equation*}
so
\begin{align*}
    \left\|
        \sum_{i=1}^N\sum_{j=1}^m
        \mathbb{E}[X_{ij}^\top X_{ij}]
    \right\|_2
    &=
    \frac{1}{m}
    \left\|
        \sum_{i=1}^N
        \mathbb{E}[Z_{ij}Z_{ij}^\top]
    \right\|_2.
\end{align*}
Thus the variance parameter is exactly \cref{eq:v-final}.

Applying \cref{lem:matrix-bernstein} to
$\Delta=\sum_{i=1}^N\sum_{j=1}^m X_{ij}$ gives, with probability at least $1-\delta$,
\begin{align*}
    \|\Delta\|_2
    &\leq
    \sqrt{
        2v\log\frac{N+d}{\delta}
    }
    +
    \frac{2}{3}\cdot \frac{2C}{m}
    \log\frac{N+d}{\delta} \\
    &=
    \sqrt{
        2v\log\frac{N+d}{\delta}
    }
    +
    \frac{4C}{3m}\log\frac{N+d}{\delta}
    =
    \varepsilon_\Delta(\delta).
\end{align*}
Finally, since $\|Z_{ij}\|_2\leq 2C$,
\begin{equation*}
    v
    \leq
    \frac{1}{m}
    \max
    \left\{
        4C^2,
        4NC^2
    \right\}
    =
    \frac{4NC^2}{m}.
\end{equation*}
This proves the lemma.
\end{proof}

\begin{theorem}[High-Probability Stability Bound]
    \label{thm:high-probability-stability}
    Under \cref{ass:bounded-gradients} and \cref{ass:pd-K}, with probability at least
$1-\delta$, 
\begin{equation}
    \|\hat w-w^*\|_2 \leq \frac{\varepsilon_\Delta(\delta)}{\mu}
    \left[2k\sqrt{N}C + k\varepsilon_\Delta(\delta) + C\right].
\end{equation}
\end{theorem}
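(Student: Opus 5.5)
We combine the three preceding results. By \cref{lem:residual_decomposition}, which applies since \cref{ass:pd-K} makes $J$ $\mu$-strongly convex over the common feasible set $\mathcal{W}_k$, we have
\begin{equation*}
\|\hat w - w^*\|_2 \le \frac{1}{\mu}\left(\|(\hat K - K)\hat w\|_2 + \|\hat\beta-\beta\|_2\right).
\end{equation*}
It therefore suffices to bound the two residual terms deterministically in terms of $\|\Delta\|_2$, and then substitute the high-probability bound from \cref{lem:preview-gradient-matrix-concentration}.

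\textbf{Quadratic residual.} Bound $\|(\hat K - K)\hat w\|_2 \le \|\hat K - K\|_2\,\|\hat w\|_2$. Feasibility of $\hat w$ gives $\|\hat w\|_2 \le \|\hat w\|_1 \le k$. Next use \cref{eq:K-decomposition-gradient-bound}, namely $\|\hat K - K\|_2 \le 2\|G\|_2\|\Delta\|_2 + \|\Delta\|_2^2$. Since each row of $G$ has norm at most $C$ by \cref{ass:bounded-gradients}, $\|G\|_2 \le \|G\|_F \le \sqrt{N}C$. Together these give
\begin{equation*}
\|(\hat K - K)\hat w\|_2 \le k\left(2\sqrt N C\|\Delta\|_2 + \|\Delta\|_2^2\right).
\end{equation*}

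\textbf{Linear residual.} By \cref{eq:beta-decomposition-gradient-bound} and $\|g_{\text{tar}}\|_2\le C$,
\begin{equation*}
\|\hat\beta - \beta\|_2 \le C\|\Delta\|_2.
\end{equation*}
Summing the two residuals gives
\begin{equation*}
\|\hat w - w^*\|_2 \le \frac{\|\Delta\|_2}{\mu}\left[2k\sqrt N C + k\|\Delta\|_2 + C\right].
\end{equation*}
This bound is monotone increasing in $\|\Delta\|_2$.

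\textbf{Probabilistic step.} On the event of \cref{lem:preview-gradient-matrix-concentration}, which has probability at least $1-\delta$, we have $\|\Delta\|_2\le\varepsilon_\Delta(\delta)$. By monotonicity we may substitute $\varepsilon_\Delta(\delta)$ for $\|\Delta\|_2$ throughout, which yields exactly the claimed inequality.

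Nothing here is a serious obstacle. The one point to check is that the operator-norm bound on $G$ uses $\sqrt N C$ via the Frobenius norm rather than something sharper, so that the constant matches the statement. The other is that the gradients are independent across $i$ and $j$ (i.i.d.\ preview samples), so that the matrix Bernstein lemma was legitimately applied.
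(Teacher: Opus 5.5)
Your proposal is correct and matches the paper's proof essentially step for step. It uses the same ingredients: the residual decomposition lemma, $\|\hat w\|_2\le\|\hat w\|_1\le k$ from feasibility, the perturbation bounds $\|\hat K-K\|_2\le 2\|G\|_2\|\Delta\|_2+\|\Delta\|_2^2$ and $\|\hat\beta-\beta\|_2\le C\|\Delta\|_2$, the estimate $\|G\|_2\le\|G\|_F\le\sqrt{N}C$, and the matrix Bernstein event $\|\Delta\|_2\le\varepsilon_\Delta(\delta)$. The only cosmetic difference is that you first state a deterministic bound in $\|\Delta\|_2$ and then substitute via monotonicity, whereas the paper plugs in $\varepsilon_\Delta(\delta)$ directly.
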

\begin{proof}
    By \cref{lem:residual_decomposition},
    \begin{align*}
        \|\hat{w} - w^*\|_2
    &\leq
    \frac{1}{\mu}
    \left(
        \|(\hat{K} - K)\hat{w}\|_2
        +
        \|\hat{\beta} - \beta\|_2
    \right) \\
    &\leq \frac{1}{\mu}
     \left(
        \|(\hat{K} - K)\|_2\|\hat{w}\|_2
        +
        \|\hat{\beta} - \beta\|_2
    \right) \\
    &\leq \frac{1}{\mu}\left(k\|(\hat{K} - K)\|_2
        +
        \|\hat{\beta} - \beta\|_2\right) \tag{$\hat{w} \in \mathcal{W}_k$}
    \end{align*}
    By \cref{lem:preview-gradient-matrix-concentration}, 
    \begin{equation*}
    \|\Delta\|_2\leq \varepsilon_\Delta(\delta).
    \end{equation*}
    By \cref{eq:K-decomposition-gradient-bound},
    \begin{equation*}
        \|\hat K-K\|_2
        \leq
        2\|G\|_2\varepsilon_\Delta(\delta)
        +
        \varepsilon_\Delta^2(\delta).
    \end{equation*}
    By \cref{eq:beta-decomposition-gradient-bound} and \cref{ass:bounded-gradients},
    \begin{equation*}
        \|\hat\beta-\beta\|_2
        \leq
        \varepsilon_\Delta(\delta)\|g_\text{tar}\|_2
        \leq
        C\varepsilon_\Delta(\delta).
    \end{equation*}
    Combining above gives us
    \begin{align*}
    \|\hat w-w^*\|_2
    &\leq
    \frac{1}{\mu}
    \left[
        k
        \left(
            2\|G\|_2\varepsilon_\Delta(\delta)
            +
            \varepsilon_\Delta^2(\delta)
        \right)
        +
        C\varepsilon_\Delta(\delta)
    \right] \\
    &=
    \frac{\varepsilon_\Delta(\delta)}{\mu}
    \left[
        2k\|G\|_2
        +
        k\varepsilon_\Delta(\delta)
        +
        C
    \right].
    \end{align*}
    Finally, by \cref{ass:bounded-gradients}, $\|g_i\|_2\leq C$ for all $i\in[N]$.
    Thus,
    \begin{equation*}
        \|G\|_2
        \leq
        \|G\|_F
        =
        \left(
            \sum_{i=1}^N \|g_i\|_2^2
        \right)^{1/2}
        \leq
        \sqrt{N}C.
    \end{equation*}
    This gives us the final upper bound as follows
    \begin{equation*}
        \|\hat w-w^*\|_2
        \leq
        \frac{\varepsilon_\Delta(\delta)}{\mu}
        \left[
            2k\sqrt{N}C
            +
            k\varepsilon_\Delta(\delta)
            +
            C
        \right].
    \end{equation*}
\end{proof}

\paragraph{High-level rate.}
We now record the leading-order dependence of the bound on
$N,m,d,\delta$, and $\mu$. Treat $C$ as a universal constant, and define
\begin{equation*}
    L_{N,d,\delta}
    :=
    \log\left(\frac{N+d}{\delta}\right).
\end{equation*}
By \cref{lem:preview-gradient-matrix-concentration} and
$v\leq 4NC^2/m$, we have
\begin{equation*}
    \varepsilon_\Delta(\delta)
    =
    O\left(
        C\sqrt{\frac{N L_{N,d,\delta}}{m}}
        +
        C\frac{L_{N,d,\delta}}{m}
    \right).
\end{equation*}
Substituting this bound into
\begin{equation*}
    \|\hat w-w^*\|_2
    \leq
    \frac{\varepsilon_\Delta(\delta)}{\mu}
    \left[
        2k\sqrt{N}C
        +
        k\varepsilon_\Delta(\delta)
        +
        C
    \right],
\end{equation*}
and writing $L=L_{N,d,\delta}$, gives
\begin{align*}
    \|\hat w-w^*\|_2
    =
    O\left(
    \frac{C^2}{\mu}
    \left[
        kN\sqrt{\frac{L}{m}}
        +
        k\sqrt{N}\frac{L}{m}
        +
        k\frac{NL}{m}
        +
        k\frac{\sqrt{N}L^{3/2}}{m^{3/2}}
        +
        k\frac{L^2}{m^2}
        +
        \sqrt{\frac{NL}{m}}
        +
        \frac{L}{m}
    \right]
    \right).
\end{align*}
In the sample-size regime $m\gtrsim L$, higher-order terms in $L/m$ are
dominated by the leading terms. Since $N\ge1$ in our setting, we have
\begin{equation*}
    k\sqrt{N}\frac{L}{m},
    \quad
    k\frac{NL}{m},
    \quad
    k\frac{\sqrt{N}L^{3/2}}{m^{3/2}},
    \quad
    k\frac{L^2}{m^2}
    =
    O\left(
        kN\sqrt{\frac{L}{m}}
    \right).
\end{equation*}
Similarly,
\begin{equation*}
    \frac{L}{m}
    =
    O\left(
        \sqrt{\frac{L}{m}}
    \right)
    =
    O\left(
        \sqrt{\frac{NL}{m}}
    \right).
\end{equation*}
Therefore, restoring $L=L_{N,d,\delta}$, we obtain
\begin{equation*}
    \|\hat w-w^*\|_2
    =
    O\left(
        \frac{C^2}{\mu}
        \left[
            kN
            \sqrt{\frac{L_{N,d,\delta}}{m}}
            +
            \sqrt{\frac{N L_{N,d,\delta}}{m}}
        \right]
    \right).
\end{equation*}
Equivalently, treating $C$ as constant and treating $k$ as a fixed budget, the $\sqrt N$ term is dominated by the $kN$ term. Thus,
\begin{equation*}
    \|\hat w-w^*\|_2
    =
    \widetilde O\left(
        \frac{kN}{\mu\sqrt m}
    \right),
\end{equation*}
where $\widetilde O(\cdot)$ hides logarithmic factors in $N,d,1/\delta$.

\section{Details of Evaluation Protocol}
\label{app:pseudocode_eval}
\cref{alg:budget_eval} specifies the evaluation protocol used to assess a given dataset ranking via full multi-step fine-tuning. Note that it is not used during the computation of valuation scores. For the fixed computed training step, we include the key training hyperparameters for SFT in \cref{tab:sft_key_hparams} and GRPO in \cref{tab:grpo_key_hparams}.

\begin{algorithm}[ht]
\caption{Budget-Constrained Evaluation of Dataset Valuation Methods}
\label{alg:budget_eval}
\begin{algorithmic}[1]

\REQUIRE Target dataset $D_{\mathrm{tar}}$; auxiliary datasets $\{D_i\}_{i=1}^N$;
valuation weights $\{w_i\}_{i=1}^N$; candidate thresholds $\mathcal{K}$;
step budget $T$; target-mixing ratio $\{\rho_k\}_{k\in\mathcal{K}}$
\ENSURE Fixed-compute performances $\{P_k^{\mathrm{fix}}\}_{k\in\mathcal{K}}$;
best fixed-compute score $P^{\star,\mathrm{fix}}$

\STATE Sort auxiliary indices by weight in descending order:
$w_{\pi(1)} \ge w_{\pi(2)} \ge \cdots \ge w_{\pi(N)}$
\STATE Let $\mathcal{I}^+ \leftarrow \{\, i \mid w_i > 0 \,\}$

\FOR{each $k \in \mathcal{K}$}
    \STATE $\mathcal{S}_k \leftarrow \emptyset$
    \FOR{$j = 1$ to $N$}
        \IF{$\pi(j) \in \mathcal{I}^+$}
            \STATE $\mathcal{S}_k \leftarrow \mathcal{S}_k \cup \{\pi(j)\}$
        \ENDIF
        \IF{$|\mathcal{S}_k| = k$}
            \STATE \textbf{break}
        \ENDIF
    \ENDFOR

    \STATE $D_{\mathrm{aux}}^{(k)} \leftarrow \bigcup_{i \in \mathcal{S}_k} D_i$

    \STATE {\footnotesize \texttt{// Fixed-compute training}}
    \STATE Initialize model parameters $\theta^{\mathrm{fix}} \leftarrow \theta_0$
    \FOR{$t = 1$ to $T$} \STATE {\footnotesize \texttt{// one optimizer update}}
        \STATE Sample minibatch $b_{\mathrm{tar}} \sim D_{\mathrm{tar}}$
        \STATE Sample minibatch $b_{\mathrm{aux}} \sim D_{\mathrm{aux}}^{(k)}$
        \IF{Bernoulli$(\rho_k)=1$}
            \STATE $b \leftarrow b_{\mathrm{tar}}$
        \ELSE
            \STATE $b \leftarrow b_{\mathrm{aux}}$
        \ENDIF
        \STATE $\theta^{\mathrm{fix}} \leftarrow \mathrm{Update}(\theta^{\mathrm{fix}}; b)$
    \ENDFOR

    \STATE $P_k^{\mathrm{fix}} \leftarrow
    \mathrm{Eval}(\theta^{\mathrm{fix}}; D_{\mathrm{tar}}^{\mathrm{val/test}})$
\ENDFOR

\STATE $P^{\star,\mathrm{fix}} \leftarrow \max_{k \in \mathcal{K}} P_k^{\mathrm{fix}}$

\end{algorithmic}
\end{algorithm}

\paragraph{Hardware.}
All experiments were run on NVIDIA A6000 GPUs (48GB VRAM). SFT training used a single A6000 GPU, with lm-eval-harness \citep{eval-harness} evaluation performed on a separate dedicated A6000 GPU. GRPO training used six A6000 GPUs, with evaluation and the vLLM \citep{kwon2023efficient} inference server each assigned to their own dedicated A6000 GPU to avoid resource contention.

\begin{table}[htbp]
\centering
\caption{SFT training hyperparameters for Qwen/Qwen3-1.7B.}
\label{tab:sft_key_hparams}
\setlength{\tabcolsep}{7pt}
\begin{tabular}{ll}
\hline
\textbf{Hyperparameter} & \textbf{Value} \\
\hline
Max sequence length ($L$) & 1024 \\
Max steps & 25 \\
Batch size (per-device train) & 8 \\
Gradient accumulation steps & 2 \\
Optimizer & \texttt{adamw\_torch} \\
Learning rate & $1.0\times 10^{-4}$ \\
LR scheduler & \texttt{cosine} \\
Warmup & ratio = 0.03 \\
Max grad norm & 1.0 \\
\hline
LoRA rank $r$ & 8 \\
LoRA $\alpha$ & 16 \\
LoRA dropout & 0.05 \\
LoRA target modules & \texttt{\{q\_proj,k\_proj,v\_proj\}} \\
Precision & bf16 \\
\hline
\end{tabular}
\end{table}

\begin{table}[htbp]
\centering
\caption{GRPO training hyperparameters for Qwen/Qwen2.5-1.5B-Instruct.}
\label{tab:grpo_key_hparams}
\setlength{\tabcolsep}{7pt}
\begin{tabular}{ll}
\hline
\textbf{Hyperparameter} & \textbf{Value} \\
\hline
Max prompt length & 256 \\
Max completion length & 1024 \\
Max steps & 500 \\
Batch size (per-device train) & 8 \\
Gradient accumulation steps & 1 \\
Optimizer & \texttt{adamw\_torch} \\
Learning rate & $1.0\times 10^{-6}$ \\
LR scheduler & \texttt{cosine} \\
Warmup & ratio = 0.04 \\
Max grad norm & 1.0 \\
\hline
Num generations & 8 \\
Temperature & 0.6 \\
Top-$p$ & 0.95 \\
KL coeff.\ ($\beta$) & 0.005 \\
\hline
LoRA rank $r$ & 8 \\
LoRA $\alpha$ & 32 \\
LoRA dropout & 0.05 \\
LoRA target modules & \texttt{\{q,k,v,o,up,down,gate\}\_proj} \\
Precision & bf16\\
\hline
\end{tabular}
\end{table}

\section{General Curvature Derivation of KMM}
\label{app:general_kmm}

The derivation in \cref{sec:kmm-gradient-space} assumes isotropic curvature for simplicity.
Here we show that the resulting KMM formulation generalizes naturally to arbitrary positive semidefinite local curvature.

Starting from the second-order approximation
\begin{equation*}
    \mathcal{L}_{\mathrm{tar}}(\theta_0 - \eta g(w))
\approx
\mathcal{L}_{\mathrm{tar}}(\theta_0)
- \eta \langle g_{\mathrm{tar}}, g(w) \rangle
+ \frac{\eta^2}{2} g(w)^\top H_{\mathrm{tar}} g(w),
\end{equation*}
where $H_{\mathrm{tar}} \succeq 0$, minimizing the predicted loss is equivalent to
\begin{equation*}
    \min_{w \in \mathcal W_B}
\frac{1}{2} g(w)^\top H_{\mathrm{tar}} g(w)
-
\langle g_{\mathrm{tar}}, g(w) \rangle.
\end{equation*}

Since $H_{\mathrm{tar}} \succeq 0$, let $H_{\mathrm{tar}}^{1/2}$ denote its symmetric square root
and $H_{\mathrm{tar}}^{\dagger/2}$ the square root of its Moore--Penrose pseudoinverse.
Completing the square yields
\begin{equation*}
    \min_{w \in \mathcal W_B}
\left\|
H_{\mathrm{tar}}^{1/2} g(w)
-
H_{\mathrm{tar}}^{\dagger/2} g_{\mathrm{tar}}
\right\|_2^2,
\end{equation*}
up to additive constants.
Defining transformed update vectors
$\tilde g_i = H_{\mathrm{tar}}^{1/2} g_i$ and
$\tilde g_{\mathrm{tar}} = H_{\mathrm{tar}}^{\dagger/2} g_{\mathrm{tar}}$,
this recovers \emph{the same least-squares matching structure} as \cref{eq:kmm_scaled}
in a transformed feature space. The isotropic case $H_{\mathrm{tar}} = \lambda^{-1} I$
corresponds to a global rescaling of this metric and is therefore a special case
of the above formulation. In practice, the local metric is generally unknown and
need not be estimated; the KMM formulation only requires a consistent
representation of update directions, and the instantiation used in the main text
constitutes one practical choice that performs well empirically.

We also evaluated a concrete non-isotropic instantiation using a diagonal empirical-Fisher approximation. Specifically, we set
\begin{equation}
    H_{\mathrm{tar}} \approx M = \operatorname{diag}(\widehat F),
    \qquad
    \widehat F = \frac{1}{n}\sum_{i=1}^{n} g_i g_i^\top,
\end{equation}
and applied the same KMM quadratic program after transforming each update direction as $\tilde g_i = M^{1/2} g_i$. This gives a practical realization of the generalized PSD-curvature formulation without requiring a full Hessian estimate. The resulting comparison is shown in \cref{tab:diag-fisher-kmm}.

\begin{table}[t]
\centering
\caption{Best-$k$ results comparing the Euclidean KMM metric with a diagonal empirical-Fisher curvature approximation. Each entry reports test metric/CE loss.}
\label{tab:diag-fisher-kmm}
\begin{tabular}{lcc}
\toprule
\textbf{Method} & \textbf{Danish} & \textbf{Marathi} \\
\midrule
One-Step+KMM (Euclidean)   & 46.21 / 2.6202 & 34.58 / 1.0875 \\
One-Step+KMM (Diag Fisher) & 46.15 / 2.6301 & 34.58 / 1.0948 \\
TV+KMM (Euclidean)         & 46.24 / 2.6231 & 34.60 / 1.0880 \\
TV+KMM (Diag Fisher)       & 46.06 / 2.6277 & 34.58 / 1.0931 \\
\bottomrule
\end{tabular}
\end{table}

Empirically, the diagonal Fisher variant is slightly but consistently worse than the Euclidean baseline. Thus, in this setting, introducing anisotropic diagonal curvature does not improve subset quality. We interpret this as an empirical bias--variance tradeoff: while the generalized formulation is operational and admits non-isotropic curvature, this particular diagonal approximation appears too noisy in the high-dimensional regime considered here. Consequently, we use the Euclidean metric as the preferred practical instantiation in the main experiments.

\section{Datasets}
\label{app:datasets}
\paragraph{Why multilingual tasks.}
We choose multilingual datasets for our experiments because they naturally
reflect practical post-training scenarios in which target-task supervision is
limited and auxiliary data from related or unrelated languages must be leveraged
to improve performance. Multilingual settings provide a controlled yet realistic
testbed for studying positive and negative transfer under data scarcity, while
allowing systematic variation in resource levels across tasks. Moreover, our
formulation is model- and algorithm-agnostic, making it straightforward to
extend to a wide range of post-training methods.

\paragraph{Instruction-following Task Experiments.}
For the instruction-following experiments, we use allenai/tulu-3-sft-personas-instruction-following \citep{lambert2024tulu} as the target task and evaluate models using strict instruction-level accuracy with IFEval \citep{zhou2023instruction}. The auxiliary candidate pool consists of seven diverse instruction-tuning sources: allenai/tulu-3-sft-personas-math-filtered, allenai/tulu-3-sft-personas-math-grade-filtered, allenai/tulu-3-sft-personas-algebra, allenai/tulu-3-sft-personas-code, SurgeGlobal/Evol-Instruct \citep{surge2024openbezoar}, HuggingFaceTB/smol-smoltalk \citep{allal2025smollm2smolgoesbig}, and teknium/openhermes \citep{teknium_openhermes}. These datasets cover math, algebraic reasoning, code, general instruction following, and broad multi-task instruction tuning. We sample 3K examples from each auxiliary dataset before computing valuation scores.

\subsection{Train}
\label{app:train_datasets}
\subsubsection{SFT}
Aya \citep{singh2024aya} is a multilingual benchmark that categorizes languages by resource
level (e.g., High/Mid/Low) in practice. We select the language set in \cref{tab:aya_langs} because these languages are reported as supported by
Qwen3 models (per the Qwen3 technical report \citep{yang2025qwen3}), and for Gemma3 \citep{team2025gemma} we additionally
verified that the tokenizer includes the Unicode characters needed to represent text in that script, so the model can tokenize and generate text in that language without falling back to unknown or degenerate tokens.  Within this set,
we emphasize two target scenarios for studying transfer from high-resource data
to low-resource performance: (i) \textbf{Danish} has an unusually small number
of training data points (\#DP), simulating a scarce-supervision setting
even though it is labeld as Mid-resource by Aya; and (ii) \textbf{Marathi} is
explicitly labeled Low-resource by Aya. Together, these targets let us simulate two realistic
regimes where high-resource auxiliary data may improve performance when target
training information is limited.

\begin{table}[t]
\centering
\small
\setlength{\tabcolsep}{5pt}
\caption{Languages used in our experiments with
Aya resource labels and the number of training data points (\#DP). Danish (\texttt{da}) and Marathi (\texttt{mr})
are the target languages in our transfer experiments. In SFT experiments, once we fix the target task, we use the rest of 25 tasks in this table as auxiliary tasks.}
\label{tab:aya_langs}
\begin{tabular}{llllllr}
\toprule
\textbf{Lang. Code} & \textbf{Language} & \textbf{Script} & \textbf{Family} & \textbf{Subgrouping} & \textbf{Resources} & \textbf{\# Training DP} \\
\midrule
ar & Arabic              & Arabic      & Afro-Asiatic    & Semitic           & High & 4995 \\
eu & Basque              & Latin       & Basque          & --                & High & 939  \\
bn & Bengali             & Bengali     & Indo-European   & Indo-Aryan        & Mid  & 1534 \\
zh & Simplified Chinese  & Han         & Sino-Tibetan    & Sinitic           & High & 4909 \\
\rowcolor{targetrow}
da & Danish              & Latin       & Indo-European   & Germanic          & Mid  & 97   \\
nl & Dutch               & Latin       & Indo-European   & Germanic          & High & 1733 \\
en & English             & Latin       & Indo-European   & Germanic          & High & 3944 \\
fr & French              & Latin       & Indo-European   & Italic            & High & 1422 \\
de & German              & Latin       & Indo-European   & Germanic          & High & 241  \\
gu & Gujarati            & Gujarati    & Indo-European   & Indo-Aryan        & Low  & 3989 \\
hi & Hindi               & Devanagari  & Indo-European   & Indo-Aryan        & High & 1153 \\
hu & Hungarian           & Latin       & Uralic          & --                & High & 98   \\
id & Indonesian          & Latin       & Austronesian    & Malayo-Polynesian & Mid  & 786  \\
it & Italian             & Latin       & Indo-European   & Italic            & High & 738  \\
kn & Kannada             & Kannada     & Dravidian       & South Dravidian   & Low  & 334  \\
ml & Malayalam           & Malayalam   & Dravidian       & South Dravidian   & Low  & 1749 \\
\rowcolor{targetrow}
mr & Marathi             & Devanagari  & Indo-European   & Indo-Aryan        & Low  & 3545 \\
pt & Portuguese          & Latin       & Indo-European   & Italic            & High & 8997 \\
ru & Russian             & Cyrillic    & Indo-European   & Balto-Slavic      & High & 423  \\
sr & Serbian             & Cyrillic    & Indo-European   & Balto-Slavic      & High & 152  \\
es & Spanish             & Latin       & Indo-European   & Italic            & High & 3854 \\
sv & Swedish             & Latin       & Indo-European   & Germanic          & High & 1310 \\
ta & Tamil               & Tamil       & Dravidian       & South Dravidian   & Mid  & 14133 \\
te & Telugu              & Telugu      & Dravidian       & South Dravidian   & Low  & 8439 \\
uk & Ukrainian           & Cyrillic    & Indo-European   & Balto-Slavic      & Mid  & 522  \\
vi & Vietnamese          & Latin       & Austroasiatic   & Vietic            & High & 8676 \\
\bottomrule
\end{tabular}
\end{table}

\subsubsection{GRPO}
Our GRPO experiments use six multilingual tasks: Thai (\texttt{th}), Chinese
(\texttt{zh}), Vietnamese (\texttt{vi}), English (\texttt{en}), Hindi
(\texttt{hi}), and Spanish (\texttt{es}), each consisting of 1822 samples.
We fix the data source across languages by selecting 1822 English
GSM8K math problems from the Big-Math-RL-Verified-Processed dataset (\texttt{open-r1/Big-Math-RL-Verified-Processed}, \citep{albalak2025bigmathlargescalehighqualitymath,openr1}).
Non-English tasks are created by automatically translating the English problems
using the Google Translate Python package. We designate \textbf{Thai} as the target task and treat the remaining languages as
auxiliary tasks, motivated by its Mid-resource classification, its inclusion in
multilingual math evaluation benchmarks, and its support in Qwen~2.5
\citep{qwen2025qwen25technicalreport}.

\subsection{Evaluation}
\label{app:eval_datasets}
We conduct all evaluations using the \texttt{lm-eval} \citep{eval-harness} framework (\cref{tab:eval_tasks}). For SFT experiments, we evaluate on M\_MMLU\_da (Danish) and M\_MMLU\_mr (Marathi), containing 13206 and 12313 samples respectively,
and report accuracy as the primary metric. For GRPO experiments, we evaluate on
the Thai test split of MGSM, which consists of 250 examples. Following prior
multilingual math evaluation practice, we report THE exact-match-flexible-extract metric to account for minor formatting variations in model outputs while preserving answer correctness.

Besides, all non-gradient-based baselines require validation signals to estimate dataset utility. To evaluate their robustness when the true target metric is expensive or unavailable, we may use proxy metrics during valuation. For SFT, we compute validation CE loss on the Dolly-machine-translated \citep{DatabricksBlog2023DollyV2} split from the Aya's evaluation suite. For RL, we directly evaluate on the target test metric.

\begin{table}[t]
\centering
\small
\setlength{\tabcolsep}{6pt}
\caption{Evaluation tasks and statistics used in our experiments.}
\label{tab:eval_tasks}
\begin{tabular}{lllll}
\toprule
\textbf{Stage} & \textbf{Task} & \textbf{Language} & \textbf{\# Test Samples} & \textbf{Metric} \\
\midrule
SFT  & M\_MMLU\_da \citep{dac2023okapi} & Danish   & 13206 & Accuracy \\
SFT  & M\_MMLU\_mr \citep{dac2023okapi} & Marathi  & 12313 & Accuracy \\
GRPO & MGSM \citep{shi2022language} & Thai     & 250      & Exact Match (Flexible-Extract) \\
\bottomrule
\end{tabular}
\end{table}

\section{Data Valuation Baselines}
\label{app:baselines}
\subsection{Gradient-based Methods}
\subsubsection{KMM}
Instead of enforcing the $\ell_1$ budget constraint $w\in\mathcal{W}_k$, we can
incorporate sparsity directly into the objective via an $\ell_1$ penalty with
regularization strength $\gamma>0$. Using the Gram matrix
$K_{ij}=\langle g_i,g_j\rangle$ and alignment vector
$\beta_i=\langle g_i,g_{\mathrm{tar}}\rangle$, we solve the unconstrained
kernelized problem in Gram matrix form (leaving the quadratic term implicit can lead to a larger canonical form and significantly slower solve times for convex solvers): 
\begin{equation}
w^*
\;=\;
\arg\min_{w\in\mathbb{R}^N}
\;\frac{1}{2}\,w^\top K w \;-\; \beta^\top w \;+\; \gamma \|w\|_1,
\label{eq:kmm_l1_penalized}
\end{equation}
which is the Lagrangian relaxation of \cref{eq:kmm_qp} with $\|w\|_1\le k$ and $\lambda = 1$.
Equivalently, \cref{eq:kmm_l1_penalized} can be viewed as a kernelized LASSO:
the quadratic term $w^\top K w$ penalizes selecting redundant datasets, while
the linear term $-\beta^\top w$ rewards alignment with the target gradient, and
the $\ell_1$ penalty promotes sparse valuation weights.
As $\gamma$ increases, the solution becomes sparser; as $\gamma\to 0$, the
solution approaches the minimum-norm matching solution in Gram space.

We solve the $\ell_1$-regularized KMM objective as a convex quadratic program
using \texttt{CVXPY} \citep{diamond2016cvxpy,agrawal2018rewriting}, with OSQP \citep{osqp} as the default solver.

\subsection{DataModel}
\label{sec:datamodel_baselines}

Datamodel methods cast dataset valuation as a linear regression problem over
subset-existence features \citep{ilyas2022datamodels}.
Specifically, we sample $m$ auxiliary dataset index subsets
$\{\mathcal{S}_r\}_{r=1}^m$, where each $\mathcal{S}_r \subseteq \{1,\dots,N\}$ specifies a set of
auxiliary datasets to include in post-training.
For each subset $\mathcal{S}_r$, we train a post-trained model on
$D^{\mathrm{tar}} \cup \bigcup_{i\in \mathcal{S}_r} D_i$ and record a scalar target
evaluation outcome $y_r\in\mathbb{R}$ (e.g., negative loss or accuracy).
We then fit a sparse linear model
\begin{equation}
    y \;\approx\; A w,
    \qquad
    A\in\mathbb{R}^{m\times N},\;\; y\in\mathbb{R}^m,\;\; w\in\mathbb{R}^N,
    \label{eq:datamodel_linear_system}
\end{equation}
where $w_i$ is the estimated valuation score of dataset $D_i$ for the
fixed target task. Following prior work, we encourage sparse solutions by solving a LASSO problem
\begin{equation}
    \hat w
    \;=\;
    \arg\min_{w\in\mathbb{R}^N}
    \;\bigl\|Aw-y\bigr\|_2^2 + \alpha\|w\|_1,
    \label{eq:datamodel_lasso}
\end{equation}
which matches the standard datamodel formulation with $\ell_1$ regularization
\citep{ilyas2022datamodels}.

\paragraph{Uniform Datamodel.}
The uniform baseline uses a binary subset-existence design matrix
$A^{\mathrm{uni}}\in\{0,1\}^{m\times N}$, where each row corresponds to one sampled auxiliary subset.
Concretely, for each row $r\in[m]$, we sample a subset $\mathcal{S}_r\subseteq[N]$ of fixed size
$\mathrm{round}(\rho N)$ (for inclusion fraction $\rho\in(0,1)$), and set
\begin{equation}
    A^{\mathrm{uni}}_{r,i} \;=\; \mathbb{I}\{i\in \mathcal{S}_r\}.
    \label{eq:uniform_design}
\end{equation}
We then train a model $\theta_{\mathcal{S}_r}=\textsc{FT}(\theta_0,\{D^{\mathrm{tar}}\}\cup\{D_i:i\in \mathcal{S}_r\})$
and evaluate it on the target task to obtain a scalar measurement
\begin{equation}
    y^{\mathrm{uni}}_r \;=\; \mathrm{Eval}\!\left(\theta_{\mathcal{S}_r}; D^{\mathrm{tar}}\right).
    \label{eq:uniform_response}
\end{equation}
Finally, we fit $\hat w$ by plugging $(A^{\mathrm{uni}},y^{\mathrm{uni}})$ into \cref{eq:datamodel_lasso}.
This is the standard datamodel regression with a binary mask matrix indicating which training datasets were present in each counterfactual training subset \citep{ilyas2022datamodels}.

\paragraph{Compressed-Sensing Datamodel.}
The compressed-sensing variant replaces the binary design with a sparse random $\{-1,0,+1\}$
construction tailored to compressive sensing \citep{ACHLIOPTAS2003671}.
For each row $r\in[m]$ and dataset index $i\in[N]$, we draw i.i.d.
\begin{equation}
    A^{\mathrm{cs}}_{r,i}
    \;=\;
    c \cdot \xi_{r,i},
    \qquad
    c=\sqrt{\frac{3}{N}},
    \qquad
    \xi_{r,i}\in\{-1,0,+1\},
    \label{eq:cs_design}
\end{equation}
with $\Pr(\xi_{r,i}=+1)=\tfrac16$, $\Pr(\xi_{r,i}=0)=\tfrac23$, and $\Pr(\xi_{r,i}=-1)=\tfrac16$.
This is the Achlioptas-type sparse projection used in the compressive-sensing datamodel baseline
\citep{ACHLIOPTAS2003671}. 

Because $A^{\mathrm{cs}}$ has signed entries, each row $r$ is implemented via two auxiliary
subsets whose difference realizes the signed measurement.
Define
\begin{equation}
    \mathcal{S}_{r,1} \;=\; \{i\in[N] : \xi_{r,i}\in\{0,+1\}\},
    \qquad
    \mathcal{S}_{r,2} \;=\; \{i\in[N] : \xi_{r,i}\in\{0,-1\}\}.
    \label{eq:cs_two_subsets}
\end{equation}
Intuitively, indices with $\xi_{r,i}=0$ appear in both subsets (intersection),
while $\xi_{r,i}=+1$ appear only in $\mathcal{S}_{r,1}$ and $\xi_{r,i}=-1$ appear only in $\mathcal{S}_{r,2}$.  
We train two models,
$\theta_{\mathcal{S}_{r,1}}=\textsc{FT}(\theta_0,\{D^{\mathrm{tar}}\}\cup\{D_i:i\in \mathcal{S}_{r,1}\})$
and
$\theta_{\mathcal{S}_{r,2}}=\textsc{FT}(\theta_0,\{D^{\mathrm{tar}}\}\cup\{D_i:i\in \mathcal{S}_{r,2}\})$,
evaluate both on the target task, and form a difference measurement
\begin{equation}
    y^{\mathrm{cs}}_r
    \;=\;
    c\left(
    \mathrm{Eval}(\theta_{\mathcal{S}_{r,1}}; D^{\mathrm{tar}})
    \;-\;
    \mathrm{Eval}(\theta_{\mathcal{S}_{r,2}}; D^{\mathrm{tar}})
    \right).
    \label{eq:cs_response}
\end{equation}
Finally, we fit $\hat w$ by solving \cref{eq:datamodel_lasso} with $(A^{\mathrm{cs}},y^{\mathrm{cs}})$.
This procedure matches the compressive-sensing adaptation of the datamodel regression, where the
signed design is realized by paired subset trainings and the response is a rescaled metric difference. 

\paragraph{Key difference.}
Both methods estimate per-dataset valuation scores by solving the same sparse linear regression
\cref{eq:datamodel_lasso}; the only difference is the design matrix construction:
binary subset-existence masks $A^{\mathrm{uni}}\in\{0,1\}^{m\times N}$ for the uniform baseline versus
a sparse signed random design $A^{\mathrm{cs}}$ (implemented via paired subsets) for the compressed-sensing baseline.

\subsection{GradEx}
\label{sec:gradex}

We next describe GradEx, a dataset valuation approach that
estimates the effect of auxiliary datasets on post-training performance
without repeatedly fine-tuning the model.
GradEx is adapted from the first-order approximation framework of
\citet{li2024scalable} and is closely related to TRAK \citep{park2023trak}.
The key idea is to exploit a local linearization of supervised fine-tuning
objectives around a shared meta-initialization.

\paragraph{Meta-initialization via multitask post-training.}
Let $\theta_0\in\mathbb{R}^d$ denote the pretrained model.
GradEx first performs multitask post-training on the union of all available
datasets,
\begin{equation}
\theta^\star
\;=\;
\textsc{FT}\!\left(
\theta_0,\;
\{D^{\mathrm{tar}}\}\cup\mathcal{D}^{\mathrm{aux}}
\right),
\end{equation}
yielding a meta-initialization $\theta^\star$.
Models fine-tuned on subsets of auxiliary datasets are assumed to remain
close (in parameter space) to $\theta^\star$, which enables accurate local
approximations of fine-tuning dynamics.

\paragraph{First-order reduction via linearization of the model output.}
Recall that in \cref{sec:greedy-gradient} $\ell(x;\theta)$ denotes the per-example
loss of a model with parameters $\theta$ on input $x$.
In supervised fine-tuning, $x$ implicitly includes the label (or target
sequence), and we equivalently write $z=(x,y)$ when it is helpful to make the
label explicit. GradEx, following TRAK \citep{park2023trak}, linearizes a carefully chosen scalar model output function instead of loss values, which allows the resulting surrogate optimization to remain convex.

We now specialize to multiclass supervised cross-entropy objectives, where
$\ell(z;\theta)=L(z;\theta)$ with
\begin{equation}
L(z;\theta)
\;:=\;
-\sum_{c=1}^C \mathbb{I}[y=c]\log p(c\mid x;\theta)
\;=\;
-\log p(y\mid x;\theta),
\qquad z=(x,y).
\label{eq:ce_def}
\end{equation}
As shown in TRAK, the key insight is to define a scalar model output as the
log-odds of the correct class,
\begin{equation}
f(z;\theta)
\;:=\;
\log\!\left(
\frac{p(y\mid x;\theta)}{1-p(y\mid x;\theta)}
\right),
\label{eq:gradex_logodds}
\end{equation}
where $p(y\mid x;\theta)$ is the softmax probability assigned to the correct
label.
A crucial property of this choice is that it allows the multi-class
cross-entropy loss to be rewritten exactly as a logistic loss in terms
of the scalar output:
\begin{equation}
L(z;\theta)
\;=\;
\log\!\bigl(1+\exp(-f(z;\theta))\bigr).
\label{eq:gradex_ce_logistic}
\end{equation}

GradEx then applies a first-order Taylor expansion to the scalar output
$f(z;\theta)$ around a reference parameter vector $\theta^\star$,
\begin{equation}
f(z;\theta)
\;\approx\;
f(z;\theta^\star)
+
\nabla_\theta f(z;\theta^\star)^\top(\theta-\theta^\star),
\label{eq:gradex_f_linearization}
\end{equation}
and substitutes this approximation into \cref{eq:gradex_ce_logistic}.

Substituting the linearized output \cref{eq:gradex_f_linearization} into
\cref{eq:gradex_ce_logistic} yields the surrogate objective
\begin{equation}
\min_{\theta}
\sum_{z_i \in \mathcal{S}}
\log\!\Bigl(
1+\exp\bigl(
-\nabla_\theta f(z_i;\theta^\star)^\top(\theta-\theta^\star)
\bigr)
\Bigr),
\label{eq:gradex_logistic_surrogate}
\end{equation}
which is exactly the objective of binary logistic regression with feature
vectors $\nabla_\theta f(z_i;\theta^\star)$ and parameter vector
$\theta-\theta^\star$.

Importantly, this reduction treats the entire multi-class problem as a single
logistic regression problem, rather than decomposing it into multiple
one-vs-rest subproblems. Here the binary logistic surrogate does not introduce explicit $\pm1$ labels:
each example contributes a positive logistic loss corresponding to increasing the log-odds of its correct class, with class competition handled implicitly by the softmax.

To make this surrogate computationally tractable in large models, GradEx
applies a random projection to the gradient features
$\nabla_\theta f(z;\theta^\star)$, preserving inner products by the
Johnson-Lindenstrauss lemma \citep{johnson1984extensions}.

Given any auxiliary subset $\mathcal{S}\subseteq\mathcal{D}^{\mathrm{aux}}$, GradEx
aggregates the projected features across all examples $z$ contained in $\mathcal{S}$ and
solves the corresponding surrogate to produce a fast estimate of the
target-task evaluation metric, denoted $\widehat{\mathrm{Eval}}(\mathcal{S})$.
This estimate approximates the true post-training performance
$\mathrm{Eval}(\theta_\mathcal{S};D^{\mathrm{tar}})$, where
$\theta_\mathcal{S}=\textsc{FT}(\theta_0,\{D^{\mathrm{tar}}\}\cup \mathcal{S})$, while avoiding
explicit fine-tuning on $\mathcal{S}$.

\paragraph{Extension to generative modeling and limitations.}
For autoregressive language modeling under SFT, the sequence-level
cross-entropy decomposes into a sum of token-level multi-class
cross-entropies. Applying \cref{eq:gradex_logodds}-\cref{eq:gradex_logistic_surrogate}
at each output position and averaging the resulting gradients yields a valid
GradEx estimator for generative SFT settings.

This reduction is specific to supervised cross-entropy objectives: it relies
on defining the log-odds output $f(z;\theta)$ so that cross-entropy can be
rewritten exactly as a logistic loss.
Reinforcement learning objectives (e.g., policy-gradient or actor--critic
losses) are defined over trajectories and stochastic returns and do not admit an
analogous log-odds reparameterization into a convex logistic surrogate.

\paragraph{GradEx-FS (forward selection).}
GradEx-FS integrates $\widehat{\mathrm{Eval}}(\cdot)$ into greedy forward
selection. Starting from $\mathcal{S}=\emptyset$, at each step it evaluates
$\widehat{\mathrm{Eval}}(\mathcal{S}\cup\{D_i\})$ for all remaining datasets
$D_i\in\mathcal{D}^{\mathrm{aux}}$ and adds the dataset yielding the largest
estimated improvement. The procedure stops when no remaining dataset improves
the estimate. Replacing full fine-tuning with first-order estimation makes
forward selection scalable to many datasets.

\paragraph{GradEx-RE (random ensemble).}
GradEx-RE instead samples random auxiliary subsets $\{\mathcal{S}_r\}_{r=1}^m$ and
computes $\widehat{\mathrm{Eval}}(\mathcal{S}_r)$ for each. Each dataset $D_i$ is then
assigned an aggregate valuation weight
\begin{equation}
w^{\mathrm{RE}}_i
\;=\;
\frac{1}{|\{r:\, D_i\in \mathcal{S}_r\}|}
\sum_{r:\, D_i\in \mathcal{S}_r} \widehat{\mathrm{Eval}}(\mathcal{S}_r),
\label{eq:gradex_re_score}
\end{equation}
i.e., the average estimated target performance across all sampled subsets in
which $D_i$ appears. Datasets are ranked by $w^{\mathrm{RE}}_i$ and selected by
thresholding or taking the top entries.

\subsection{Baseline hyperparameters}
For all baselines, we report the best performance obtained over a set of
method-specific hyperparameters.

For TV baselines, we vary the number of fine-tuning steps used
to construct task vectors, considering half, equal to, and double the default
training steps used in the target fine-tuning configuration in \cref{app:pseudocode_eval}.

For KMM, we tune the $\ell_1$ regularization strength in the penalized
formulation, selecting $\gamma \in \{5\times 10^{-2},\,5\times 10^{-4}\}$.

For GradEx, we use the random-ensemble (RE) variant with $10N$ randomly
sampled subsets, where $N$ is the total number of auxiliary datasets, following protocol in \citet{li2024scalable}.
For both GradEx-FS and GradEx-RE, we fix the random projection dimension to 100.

For DataModel baselines, for a fair comparison with GradEx, we also run a total of
$10N$ post-training runs.
For the compressed-sensing variant, each measurement is implemented via paired
subset trainings, resulting in half as many rows in the design matrix for the
same total training budget.

\section{Additional Experiments}
\label{app:additional}
\begin{table}[t]
\centering
\small
\setlength{\tabcolsep}{2.5pt}
\caption{Popularity-based auxiliary task sampling weights. Higher values indicate
a higher probability of being sampled within the auxiliary portion of each batch.}
\label{tab:popularity_weights}
\begin{tabular}{lcccccccccccccccccccccccccc}
\toprule
\textbf{Language} 
& en & zh & es & de & fr & ru & pt & it & ar & hi & id & nl 
& sv & bn & da & eu & gu & hu & kn & ml & mr & sr & ta & te & uk & vi \\
\midrule
\textbf{Weight} 
& 2.0 & 2.0 & 1.5 & 1.5 & 1.5 & 1.5 & 1.2 & 1.2 & 1.2 & 1.2 & 1.0 & 1.0
& 1.0 & 0.8 & 0.8 & 0.8 & 0.8 & 0.8 & 0.8 & 0.8 & 0.8 & 0.8 & 0.8 & 0.8 & 0.8 & 0.8 \\
\bottomrule
\end{tabular}
\end{table}

\begin{figure}
    \centering
    \includegraphics[width=0.5\linewidth]{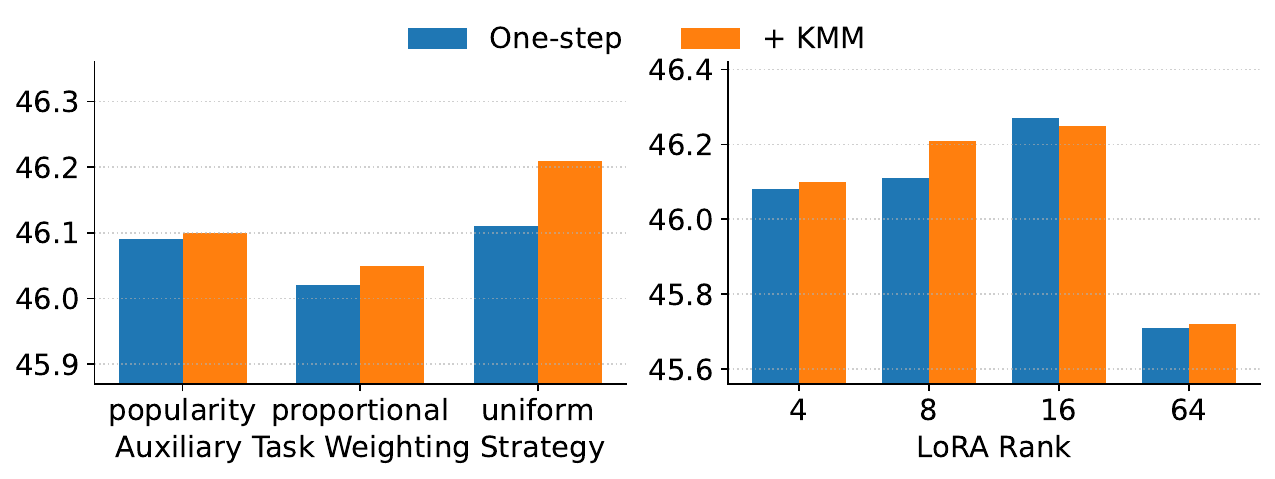}
    \caption{One-step vs. KMM-based selection best-$k$ performance under different auxiliary data weighting strategies (left) and LoRA adapter ranks (right) for Danish as target task. Popularity weights auxiliary tasks based on resource availability to reflect realistic language corpus composition, proportional corresponds to the original Aya training distribution, and uniform denotes equal weighting across tasks (our default).}
    \label{fig:allocation_lora_bars}
\end{figure}

\paragraph{Robustness of KMM to Auxiliary Task Distribution Weighting.} We consider three auxiliary task distribution weighting strategies:
\emph{uniform}, \emph{proportional}, and \emph{popularity}-based.
Under the uniform strategy, the auxiliary budget $1-\rho_k$ is evenly allocated
across all auxiliary tasks. In contrast, the proportional strategy allocates
auxiliary sampling probability in proportion to each task’s training data size
(\cref{tab:aya_langs}), while the popularity-based strategy assigns weights
according to real-world language popularity and resource categorization.
Sampling weights (\cref{tab:popularity_weights}) are normalized within the auxiliary pool, with larger weights
corresponding to a higher probability of being sampled in each training batch. 

In \cref{fig:allocation_lora_bars}, across all auxiliary data weighting strategies, KMM consistently outperforms one-step selection. Notably, this holds even when the auxiliary data distribution closely reflects realistic corpus composition, indicating that KMM does not rely on a particular weighting heuristic to be effective. 

\begin{table}[ht]
\centering
\caption{Best-\(k\) accuracy on Danish using non-uniform softmax weighting over the selected top-\(k\) datasets. For each method, we evaluate \(k \in \{5,10,15,20,25\}\) and report the best result.}
\label{tab:nonuniform-weighting}
\resizebox{\textwidth}{!}{
\begin{tabular}{lccccccccc}
\toprule
Method
& One Step
& One Step+KMM
& TV
& TV+KMM
& DataModel
& DataModel-CS
& GradEX-FS
& GradEX-RE
& Random \\
\midrule
Best-\(k\) Acc.
& 45.96
& \textbf{46.12}
& 45.99
& 46.07
& 45.90
& 46.03
& 46.05
& 46.07
& 45.96 \\
\bottomrule
\end{tabular}
}
\end{table}

We also evaluate whether using the learned scores as non-uniform mixture weights improves downstream performance. For each method and each \(k \in \{5,10,15,20,25\}\), we first select the top-\(k\) datasets according to the corresponding scores, discard all datasets outside the top-\(k\) set, and then assign mixture weights by applying a softmax over the selected top-\(k\) scores. This allows higher-scoring selected datasets to receive larger training weights while retaining the same top-\(k\) selection structure. The best result across \(k\) for each method on Danish is reported in \cref{tab:nonuniform-weighting}. Compared with the uniform-weighting results in \cref{tab:multilingual_results}, non-uniform softmax weighting leads to lower absolute performance for all methods in this setting. Nevertheless, One Step+KMM remains the best-performing method, and KMM continues to improve over its corresponding gradient-based baseline.

\paragraph{Effect of LoRA Adapter Capacity on KMM.} KMM’s advantage over one-step selection persists across a range of LoRA adapter ranks, demonstrating robustness to changes in fine-tuning capacity. In 3 out of 4 comparisons of \cref{fig:allocation_lora_bars}, KMM yields results stronger than its one-step baselines for most ranks, indicating that its benefits are not tied to a specific parameter budget.

\begin{table}[ht]
\centering
\caption{Best-\(k\) accuracy on RoBERTa using 13 auxiliary NLU tasks. This evaluates whether KMM generalizes beyond decoder-only LLMs to an encoder-only architecture.}
\label{tab:roberta-architecture}
\resizebox{\textwidth}{!}{
\begin{tabular}{lcccccccc}
\toprule
Method
& One Step
& One Step+KMM
& TV
& TV+KMM
& DataModel
& DataModel-CS
& GradEX-FS
& GradEX-RE \\
\midrule
Best-$k$ Acc.
& 96.22
& 96.67
& 96.67
& \textbf{96.90}
& 95.76
& 96.56
& 95.87
& 95.64 \\
\bottomrule
\end{tabular}
}
\end{table}

\paragraph{Results for Encoder-based Architecture.}
To assess whether the proposed KMM correction transfers beyond decoder-only LLMs, we additionally evaluate it with RoBERTa, an encoder-only architecture, using 13 auxiliary NLU tasks: CoLA \citep{warstadt2018neural}, CR \citep{hu2004mining}, MNLI \citep{williams2017broad}, MPQA \citep{wiebe2005annotating}, MR \citep{pang2004sentimental}, MRPC \citep{dolan2005automatically}, QNLI \citep{wang2018glue}, QQP \citep{sharma2019natural}, RTE \citep{wang2018glue}, SNLI \citep{bowman2015large}, SST-5 \citep{socher2013recursive}, SUBJ \citep{pang2004sentimental}, and TREC \citep{li2002learning}. The results are reported in \cref{tab:roberta-architecture}. KMM consistently improves its corresponding gradient-based baseline. Moreover, Task Vector+KMM achieves the best overall accuracy among all evaluated methods. These results suggest that the proposed matching objective is not tied to decoder-only architectures.

\paragraph{Multi-seed Results.}
We additionally report multi-seed results with five random seeds for both the multilingual SFT and multilingual GRPO settings. The SFT results are shown in \cref{tab:multilingual-sft-multiseed}. Although the absolute gains in the multilingual SFT setting are modest, the ranking across methods is stable: KMM-based variants consistently rank among the strongest methods. In particular, TV+KMM achieves the best accuracy on both Danish and Marathi. For Danish, the confidence intervals of TV and TV+KMM do not overlap, suggesting that the improvement is statistically meaningful. The multilingual GRPO results are shown in \cref{tab:multilingual-grpo-multiseed}. These results further confirm the single-run findings. 

\begin{table}[ht]
\centering
\caption{Multi-seed best-\(k\) results on the multilingual SFT setting over five random seeds. }
\label{tab:multilingual-sft-multiseed}
\begin{tabular}{lcc}
\toprule
\textbf{Method} & \textbf{Danish} & \textbf{Marathi} \\
\midrule
One Step      & \(46.10 \pm 0.02\) & \(34.48 \pm 0.08\) \\
One Step+KMM  & \(46.18 \pm 0.03\) & \(34.60 \pm 0.08\) \\
TV            & \(46.26 \pm 0.06\) & \(34.37 \pm 0.09\) \\
TV+KMM        & \(\mathbf{46.38 \pm 0.07}\) & \(\mathbf{34.76 \pm 0.11}\) \\
DataModel     & \(45.99 \pm 0.05\) & \(34.43 \pm 0.06\) \\
DataModel-CS  & \(46.10 \pm 0.06\) & \(33.96 \pm 0.13\) \\
GradEX-FS     & \(46.15 \pm 0.01\) & \(34.41 \pm 0.04\) \\
GradEX-RE     & \(46.23 \pm 0.01\) & \(34.53 \pm 0.08\) \\
Random        & \(46.11 \pm 0.02\) & \(34.46 \pm 0.09\) \\
\bottomrule
\end{tabular}
\end{table}

\begin{table}[ht]
\centering
\caption{Multi-seed best-\(k\) results on the multilingual GRPO setting over five random seeds. }
\label{tab:multilingual-grpo-multiseed}
\begin{tabular}{lc}
\toprule
\textbf{Method} & \textbf{Best-\(k\) Acc. (\%)} \\
\midrule
One Step      & \(37.80 \pm 1.41\) \\
One Step+KMM  & \(\mathbf{42.06 \pm 0.87}\) \\
TV            & \(38.62 \pm 0.78\) \\
TV+KMM        & \(40.27 \pm 0.66\) \\
Random        & \(37.59 \pm 0.92\) \\
\bottomrule
\end{tabular}
\end{table}

\paragraph{Dependency on Linearization.}

We next evaluate whether the advantage of KMM persists when training deviates further from the linearization point. Using the same valuation scores, we train on Danish with \(0.25\times\), \(0.5\times\), \(2\times\), and \(4\times\) the default number of training steps, where larger budgets correspond to updates farther from the local approximation. The results are shown in \cref{tab:training-budget-robustness}.

\begin{table}[ht]
\centering
\caption{Best-\(k\) accuracy on Danish under different training budgets. The same valuation scores are used across budgets, and larger budgets move training farther from the local Taylor approximation.}
\label{tab:training-budget-robustness}
\begin{tabular}{lccc}
\toprule
\textbf{Training Steps} & \textbf{Random} & \textbf{One Step} & \textbf{One Step+KMM} \\
\midrule
\(0.25\times\) & \(45.69\) & \(45.87\) & \(\mathbf{45.93}\) \\
\(0.5\times\)  & \(45.80\) & \(46.01\) & \(\mathbf{46.08}\) \\
\(2\times\)    & \(45.75\) & \(45.97\) & \(\mathbf{45.99}\) \\
\(4\times\)    & \(45.56\) & \(45.87\) & \(\mathbf{46.01}\) \\
\bottomrule
\end{tabular}
\end{table}

We note that these results are not directly comparable to the main results in \cref{tab:multilingual_results}: there, we tune the number of epochs to maximize performance uniformly across all methods, whereas here we intentionally use non-optimal training budgets to stress-test the local approximation. Consequently, the absolute scores are slightly lower. Nevertheless, One Step+KMM consistently outperforms One Step across all training budgets, indicating that the benefit of KMM is robust even as training moves farther from the local second-order Taylor approximation.

\paragraph{Scaling to Larger Candidate Pools.}

We evaluate scalability on the instruction-following task by varying the number of candidate data groups, \(N \in \{100,500,1000\}\). To make the scaling experiment reflect selection among meaningfully different data sources, we construct candidate groups using embedding-based clustering rather than random splits. Specifically, we encode all training examples with sentence embeddings \citep{reimers2019sentence}, cluster them in embedding space, and treat each cluster as one candidate dataset. This produces candidate groups that are internally semantically coherent while remaining diverse across groups. We then compute One Step scores for each candidate group and apply KMM on top of these scores to select the final subset. Downstream performance is measured by fine-tuning Llama 3.2 3B on the selected data and evaluating best-\(k\) instruction-following accuracy. As shown in \cref{tab:clustered-scaling-performance}, One Step+KMM consistently improves over One Step across all tested candidate-pool sizes.

\begin{table}[ht]
\centering
\caption{Downstream best-\(k\) performance on Llama 3.2 3B for the instruction-following task using clustered candidate splits. Candidate groups are constructed by clustering examples with sentence embeddings.}
\label{tab:clustered-scaling-performance}
\begin{tabular}{lccc}
\toprule
\textbf{\(N\)} & \textbf{One Step} & \textbf{One Step+KMM} & \textbf{Gain} \\
\midrule
\(100\)  & \(42.33\) & \(\mathbf{44.72}\) & \(+2.39\) \\
\(500\)  & \(45.32\) & \(\mathbf{45.56}\) & \(+0.24\) \\
\(1000\) & \(44.36\) & \(\mathbf{47.24}\) & \(+2.88\) \\
\bottomrule
\end{tabular}
\end{table}

\begin{table}[ht]
\centering
\caption{Wall-clock time for One Step score computation and KMM quadratic-program solving across different numbers of candidate groups.}
\label{tab:scaling-wallclock}
\begin{tabular}{lccc}
\toprule
\textbf{Method} & \textbf{\(N=100\)} & \textbf{\(N=500\)} & \textbf{\(N=1000\)} \\
\midrule
One Step & \(8.1\) min & \(40.5\) min & \(88.6\) min \\
KMM      & \(0.9\)--\(1.4\) min & \(1.0\) min & \(2.4\)--\(2.5\) min \\
\bottomrule
\end{tabular}
\end{table}

\begin{table}[ht]
\centering
\caption{Peak CPU memory usage of the KMM quadratic program. }
\label{tab:kmm-cpu-memory}
\begin{tabular}{lc}
\toprule
\textbf{\(N\)} & \textbf{KMM CPU Peak} \\
\midrule
\(100\)  & \(34\) MB \\
\(500\)  & \(21\) MB \\
\(1000\) & \(137\) MB \\
\bottomrule
\end{tabular}
\end{table}

We also measure the computational cost of the two main stages: computing One Step scores and solving the KMM quadratic program. As reported in \cref{tab:scaling-wallclock}, One Step scales approximately linearly with the number of candidate groups. In contrast, the KMM optimization adds only a small CPU-side overhead. One Step uses \(5.5\)GB of GPU memory, constant across \(N\), since candidate groups are processed sequentially for model-based gradient computation. KMM is CPU-only and uses \(0\)GB GPU memory. The CPU memory footprint of KMM remains modest at the tested scales, as shown in \cref{tab:kmm-cpu-memory}.

Overall, these measurements show that the dominant cost is computing the underlying gradient-based scores, while the additional KMM step is negligible at the tested scales. Although generic quadratic programming can have unfavorable worst-case scaling in \(N\), the observed wall-clock time and memory footprint indicate that KMM is practical for substantially larger candidate pools. Extrapolating from these measurements, KMM itself could scale to \(N=100{,}000\) on a small lab server, requiring roughly \(16\) hours for the QP solve and about \(40\)GB of CPU memory to load the Gram matrix.

\end{document}